\theoremstyle{remark} % 设置 remark 的样式
\newtheorem{theorem}{Theorem}
\newtheorem{lemma}{Lemma}
\newtheorem{assumption}{Assumption}
\begin{document}

\title{Confidence-Guided Human-AI Collaboration: Reinforcement Learning with Distributional Proxy Value Propagation for Autonomous Driving}%confinencally Human centering   

\author{Zeqiao Li,~Yijing Wang,~Haoyu Wang,~Zheng Li,~Peng Li,~Zhiqiang Zuo,~\IEEEmembership{Senior Member,~IEEE},~Chuan Hu
	% <-this % stops a space
	\thanks{This work was supported in part by the National Natural Science Foundation of China under Grant 62173243, Grant 61933014, Grant 62403348, and the Young Scientists Fund of the National Natural Science Foundation of Tianjin, China, under Grant 23JCQNJC01780, and the Postdoctoral Fellowship Program of CPSF under Grant GZC20241208, Grant 2024M762357, and the Foundation of Key Laboratory of System Control and Information Processing, Ministry of Education, P.R. China, under Grant Scip20240116.\\
		% \emph{(Corresponding author: Haoyu Wang.)}
		\indent The authors are with the Tianjin Key Laboratory of Intelligent Unmanned Swarm Technology and System, School of Electrical and Information Engineering, Tianjin University, Tianjin 300072, China; Haoyu Wang is also with Key Laboratory of System Control and Information Processing, Ministry of Education of China, Shanghai, 200240 (email:  lizeqiao@tju.edu.cn; yjwang@tju.edu.cn; why2014@tju.edu.cn; zhengl@tju.edu.cn; lipeng\_2017@tju.edu.cn; zqzuo@tju.edu.cn). Chuan~Hu is with the School of Mechanical Engineering, Shanghai Jiao Tong University, Shanghai 200240, China (e-mail: chuan.hu@sjtu.edu.cn).

		% Yijing~Wang, Zeqiao~Li, Haoyu~Wang and Zhiqiang~Zuo are with the Tianjin Key Laboratory of Intelligent Unmanned Swarm Technology and System, School of Electrical and Information Engineering, Tianjin University, Tianjin 300072, China 
		% (e-mail: yjwang@tju.edu.cn; lizeqiao@tju.edu.cn; why2014@tju.edu.cn; zqzuo@tju.edu.cn).\\
    % \indent Chuan~Hu is with the School of Mechanical Engineering, Shanghai Jiao Tong University, Shanghai 200240, China (e-mail: chuan.hu@sjtu.edu.cn).
	}% <-this % stops a space
}

% The paper headers
\markboth{Journal of \LaTeX\ Class Files,~Vol.~14, No.~8, August~2021}%
{Shell \MakeLowercase{\textit{et al.}}: A Sample Article Using IEEEtran.cls for IEEE Journals}

% \IEEEpubid{0000--0000/00\$00.00~\copyright~2021 IEEE}
% Remember, if you use this you must call \IEEEpubidadjcol in the second
% column for its text to clear the IEEEpubid mark.

\maketitle

\begin{abstract} 
    Autonomous driving promises significant advancements in mobility, road safety and traffic efficiency, yet reinforcement learning and imitation learning face safe-exploration and distribution-shift challenges. Although human-AI collaboration alleviates these issues, it often relies heavily on extensive human intervention, which increases costs and reduces efficiency. This paper develops a confidence-guided human-AI collaboration (C-HAC) strategy to overcome these limitations. First, C-HAC employs a distributional proxy value propagation method within the distributional soft actor-critic (DSAC) framework. By leveraging return distributions to represent human intentions C-HAC achieves rapid and stable learning of human-guided policies with minimal human interaction. Subsequently, a shared control mechanism is activated to integrate the learned human-guided policy with a self-learning policy that maximizes cumulative rewards. This enables the agent to explore independently and continuously enhance its performance beyond human guidance. Finally, a policy confidence evaluation algorithm capitalizes on DSAC's return distribution networks to facilitate dynamic switching between human-guided and self-learning policies via a confidence-based intervention function. This ensures the agent can pursue optimal policies while maintaining safety and performance guarantees. Extensive experiments across diverse driving scenarios reveal that C-HAC significantly outperforms conventional methods in terms of safety, efficiency, and overall performance, achieving state-of-the-art results. The effectiveness of the proposed method is further validated through real-world road tests in complex traffic conditions. The videos and code are available at: https://github.com/lzqw/C-HAC.
\end{abstract}

\begin{IEEEkeywords}
  Autonomous driving, Human-AI collaboration, Deep reinforcement learning.
\end{IEEEkeywords}

\section{Introduction}
\IEEEPARstart{A}{utonomous} driving technology is revolutionizing transportation by offering enhanced mobility, improved safety, increased traffic efficiency, and reduced environmental impacts. These advancements have garnered significant attention from both academia and industry\cite{Learn1Day,E2ESelf,DenseRL,SurveyRLIL01}. Recent advances in artificial intelligence (AI) have propelled learning-based policies to the forefront of autonomous driving research, providing an alternative to traditional modular approaches that segment the driving system into distinct components such as perception, localization, planning, and control\cite{SurveyRLIL02,survey4}. Learning-based policies, particularly end-to-end approaches, have demonstrated potential in enhancing the perception and decision-making capabilities of AI systems in autonomous vehicles (AVs)\cite{survey5}. To further enhance learning-based approaches in autonomous driving, reinforcement learning (RL), imitation learning (IL) and human-AI collaboration (HAC) learning have been extensively explored.\\
% \begin{figure}[t]
%   \centering
%   \includegraphics[width=3in]{pic/fig1.pdf}
%   \caption{Human-AI collaboration}
%   \label{fig_1}
%   \end{figure}
% \begin{figure}[]
%   \centering
%   \includegraphics[width=3.in]{pic/fig1.pdf}
%   \caption{Difference between human and AI decision-making in driving.}
%   \label{fig_1}
%   \end{figure}
\indent RL is a powerful paradigm for training autonomous systems through trial-and-error interactions with the environment. It enables agents to establish causal relationships among observations, actions, and outcomes\cite{RLSurvey01}. A reward function guides the learning process and encodes desired behaviors, allowing extensive exploration of the environment to maximize cumulative rewards. This approach often yields robust policies capable of handling complex tasks \cite{RLSurvey02,RLSurvey03,Roach}. Nevertheless, several challenges restrict the applicability of RL in safety-critical scenarios. Designing a reward function that aligns with diverse human preferences is nontrivial, and flawed reward functions may lead to biased, misguided, or undesirable behaviors \cite{RLReward01,RLReward02,RLReward03}. Moreover, the exploratory nature of RL frequently induces unsafe situations during both training and testing. The low sample efficiency in agent-environment interactions further exacerbates computational costs and prolongs training \cite{RLSample01,RLSample02}. In addition, although policies trained exclusively with RL can be technically safe, they often lack the natural, human-like behaviors critical for coordinating with other vehicles in real-world driving scenarios \cite{RLSafe}. Therefore, there is a pressing need for approaches that mitigate these limitations—particularly unsafe exploration, reward design pitfalls, and the lack of human-like behavior—to fully realize RL's potential in autonomous driving.

IL is a promising approach for learning driving policies by replicating human driving behavior through demonstrated actions. Prominent methods include behavior cloning (BC)\cite{ILSurvey01} and inverse reinforcement learning (IRL)\cite{ILIRL01}. Leveraging expert demonstrations allows novice agents to avoid hazardous interactions during training, enhancing safety. Techniques such as BC and offline RL train agents on pre-existing datasets without environment interaction, further reducing risk \cite{IL01,IL02,IL03,IL04,IL05,IL06}, while IRL infers a reward function from demonstrations to guide desirable behaviors \cite{IRL01}. Despite these strengths, IL faces significant challenges. Distributional shift can cause compounding errors, pushing the agent away from its training distribution and resulting in control failures \cite{ILSHOURT02,ILSHOURT01}. Moreover, IL's success heavily depends on the quality of demonstrations, leaving it vulnerable to task variations \cite{ILSHOURT03}. Rare or atypical scenarios, often underrepresented in training data, can lead to erratic policy responses, especially in complex environments where expert data may be scarce or suboptimal \cite{ILSHOURT04}. Consequently, IL-based methods demand solutions that mitigate distributional shift, cope with suboptimal or limited demonstrations, and adapt to diverse driving conditions.\\
  \indent Human-AI collaboration (HAC) leverages human expertise to enhance safety and efficiency. Methods such as DAgger \cite{DAGGER} and its extensions \cite{DAGGEROther01,DAGGEROther02,DAGGEROther03} periodically request expert demonstrations to correct compounding errors in IL, while expert intervention learning (EIL) \cite{EIL} and intervention weighted regression (IWR) \cite{IWR} rely on human operators to intervene during exploration for safer state transitions. Other approaches integrate human evaluative feedback \cite{HIL01,HIL02,HIL03} or partial demonstrations with limited interventions, as in HACO \cite{HACO}, to guide the learning process while reducing human effort. Proxy value propagation (PVP) \cite{PVP} uses active human input to learn a proxy value function encoding human intentions, demonstrating robust performance across diverse tasks and action spaces. Despite these advances, many HAC methods assume near-optimal human guidance, which can be prohibitively expensive or suboptimal in practice \cite{HILSHORT01}. Humans may adopt conservative strategies in ambiguous scenarios, such as decelerating behind a slower vehicle for safety, even when a more effective approach (e.g., overtaking) might exist. Furthermore, continuous oversight places a heavy burden on human operators, hindering large-scale deployment \cite{ILIRL01}, while insufficient utilization of autonomous exploration data constrains overall learning efficiency \cite{HILSHORT03}. Balancing human involvement, agent safety, and learning efficiency thus remains a central challenge in HAC research.\\
\indent To reduce reliance on human guidance while ensuring efficient and safe policy learning, we propose a confidence-guided human-AI collaboration (C-HAC) strategy in this paper. It operates in two stages: a human-guided learning phase and a subsequent RL enhancement phase. In the former, Distributional proxy value propagation (D-PVP) encodes human intentions in the Distributional soft actor-critic (DSAC) algorithm. For the latter, the agent refines its policy independently. A shared control mechanism combines the learned human-guided policy with a self-learning policy, enabling exploration beyond human demonstrations. Additionally, a policy confidence evaluation algorithm leverages DSAC’s return distributions to switch between human-guided and self-learning policies, ensuring safety and reliable performance. Extensive experiments show that C-HAC quickly acquires robust driving skills with minimal human input and continues improving without ongoing human intervention, outperforming conventional RL, IL, and HAC methods in safety, efficiency, and overall results. The main contributions of this paper are as follows:
\begin{itemize} 
  \item Distributional proxy value propagation (D-PVP): A novel method integrating PVP into the DSAC framework is suggested. D-PVP enables the agent to learn effective driving policies with relatively few human-guided interactions, achieving competent performance within a short training duration.
  \item Shared control mechanism: A mechanism combining the learned human-guided policy with a self-learning policy is proposed. The self-learning policy is designed to maximize cumulative rewards, allowing the agent to explore independently and continuously improve its performance beyond human guidance. 
  \item Policy confidence evaluation algorithm: An algorithm leveraging DSAC's return distribution networks to facilitate dynamic switching between human-guided and self-learning policies via an intervention function is developed. This ensures the agent can pursue optimal policies while maintaining guaranteed safety and performance levels. 
\end{itemize}
\begin{figure*}[!t]
  \centering
  \includegraphics[width=7.in]{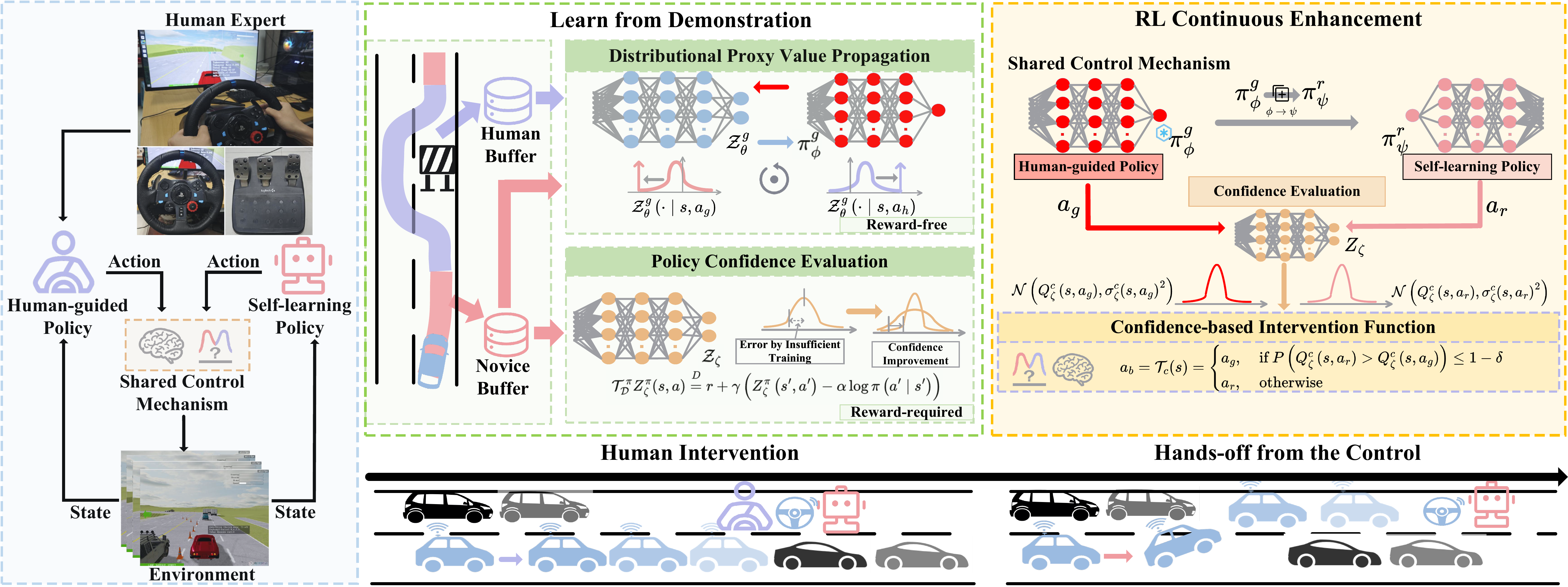}
  \caption{Overall framework of C-HAC}
  \label{fig_framework}
  \end{figure*}
\section{Methodology}
This section presents the proposed C-HAC framework, which integrates D-PVP, a shared control mechanism and policy confidence evaluation. As illustrated in Fig. \ref{fig_framework}, the framework consists of two phases. Initially, the agent employs D-PVP to learn from human demonstrations, allowing it to acquire safe and efficient driving policies through expert interventions. Subsequently, the agent continues to refine its policy via reinforcement learning, leveraging the shared control mechanism and policy confidence evaluation for safe and reliable exploration. To provide a comprehensive understanding, we will first offer a brief overview of conventional RL and HAC. We will then delve into detailed explanations of D-PVP, the shared control mechanism combined with policy confidence evaluation, and the entire training process.
\subsection{Human-AI Collaboration}
The policy learning of end-to-end autonomous driving is essentially a continuous action space problem in RL, which can be formulated as a Markov decision process (MDP). MDP is defined by the tuple $\mathcal{M} = \langle \mathcal{S}, \mathcal{A}, \mathcal{P}, \mathcal{R}, \gamma \rangle$, where $\mathcal{S}$ is the state space, $\mathcal{A}$ is the action space, $\mathcal{P}$ is the transition probability, $\mathcal{R}$ is the reward function, and $\gamma$ is the discount factor. The goal in standard RL is to learn a policy $\pi: \mathcal{S} \rightarrow \mathcal{A}$ that maximizes the expected cumulative reward $R_t = \sum_{t=0}^{\infty} \gamma^t r_t$, where $r_t$ is the reward at time $t$. In this study, we employ an entropy-augmented objective function\cite{SAC}, incorporating policy entropy into the reward term:
\begin{equation}
  J_\pi=\underset{\left(s_{i \geq t}, a_{i \geq t}\right) \sim \rho_\pi}{\mathbb{E}}\left[\sum_{i=t}^{\infty} \gamma^{i-t}\left[r_i+\alpha \mathcal{H}\left(\pi\left(\cdot \mid s_i\right)\right)\right]\right],
  \label{entropy_augmented_objective_function}
  \end{equation}
where $\alpha$ is the temperature coefficient, and the policy entropy $\mathcal{H}$ has the form
\begin{equation}
  \mathcal{H}(\pi(\cdot \mid s))=\underset{a \sim \pi(\cdot \mid s)}{\mathbb{E}}[-\log \pi(a \mid s)].
  \end{equation}
\indent The soft \(Q\) value is given by
\begin{equation}
  \begin{aligned}
  Q^\pi&\left(s_t, a_t\right)=r_t\\
  &\ \ \ \ \ +\gamma \underset{\substack{\left(s_{i>t}, a_{i>t}\right) \sim \rho_\pi}}{\mathbb{E}}\left[\sum_{i=t}^{\infty} \gamma^{i-t}\left[r_i-\alpha \log \pi\left(a_i \mid s_i\right)\right]\right],
\end{aligned}
\label{soft_q_value}
\end{equation}
which delineates the expected soft return for choosing $a_t$ at state $s_t$ under policy $\pi$. The soft \(Q\) value can be updated using the soft Bellman operator $\mathcal{T}^\pi$ 
\begin{equation}
  \begin{aligned}
    \mathcal{T}^\pi Q^\pi(s, a)=r+\gamma \underset{s^{\prime} \sim p, a^{\prime} \sim \pi}{\mathbb{E}}\left[Q^\pi\left(s^{\prime}, a^{\prime}\right)-\alpha \log \pi\left(a^{\prime} \mid s^{\prime}\right)\right].
\end{aligned}
\label{soft_policy_evaluation}
\end{equation}
  \indent Meanwhile, the policy $\pi$ is updated by maximizing the entropy-augmented objective (\ref{entropy_augmented_objective_function}), that is,
  \begin{equation}
    \begin{aligned}
    \pi_{\text {new }} & =\arg \max _\pi J_\pi \\
    & =\arg \max _\pi \underset{s \sim \rho_\pi, a \sim \pi}{\mathbb{E}}\left[Q^{\pi_{\text {old }}}(s, a)-\alpha \log \pi(a \mid s)\right].
    \end{aligned}
    \label{soft_policy_improvement}
    \end{equation}
\indent HAC extends the standard RL by integrating human expert into the learning process. During the training process, the human expert monitors the agent-environment interactions, and can intervene the agent's exploration by providing guidance. Specifically, if the agent encounters a risky situation or makes a suboptimal decision, the human expert can overwrite the agent action $a_g$ with their own action $a_h$. The action applied to the environment can thus be expressed as $\widehat{a}  = I\left(s, a\right) a_h+\left(1-I\left(s, a\right)\right) a_g$, where $I\left(s, a_g\right)$ is a Boolean indicator. Thus, with the human policy denoted by $\pi^h$, the shared policy $\pi^b$, which generates the actual trajectory, is defined as 
\begin{equation}
  \pi^b(a \mid s)=\pi^g(a \mid s)(1-I(s, a))+\pi^h(a \mid s) G(s),
  \label{mixed_policy}
  \end{equation}
where $G(s)=\int_{a^{\prime} \in \mathcal{A}} I\left(s, a^{\prime}\right) \pi^g\left(a^{\prime} \mid s\right) d a^{\prime}$ is the probability of human intervention. 
% the assumption High-quality human action and Expert-level human takeover and the Upper bound of training risk
%the short of the assumption and problem of the assumption

% \indent Actually, the assumption of high-quality human action and expert-level human takeover is not always satisfied. In practice, human experts may not always provide optimal guidance, and the agent may encounter situations where human intervention is suboptimal or even erroneous\cite{metaworld}. Another issue is that solely relay on human references, the agent may misinterpret human strategies. For example, during demonstrations of navigating through traffic, the agent might learn a following strategy rather than straightforward driving. As a result, when the lead vehicle disappears, the agent may come to a stop.
% \indent To solve that problem and reduces the burden on human operators, we enhance the HILRL framework with Distributional Soft Actor-Critic (DSAC), Proxy Value Propagation (PVP) and dynamic confidence-aware algorithms. These enhancements enable rapid and stable guided policy learning in a Distributional framework while leveraging Distributional Q-network to evaluate policy confidence. This allows for continuous improvement under the guidance of imperfect human strategies.
\subsection{Distributional Proxy Value Propagation}
% \begin{figure}[!t]
%   \centering
%   \includegraphics[width=3.5in]{pic/paper_dpvp.pdf}
%   \caption{Learning process from active human involvement}
%   \label{dpvp}
%   \end{figure}

\begin{figure}[]
  \centering
  % \subfloat{\includegraphics[width=2.in]{pic/guid1.jpg}}%
  \subfloat[Human oversees the agent-environment interactions.]{\includegraphics[width=1.35in]{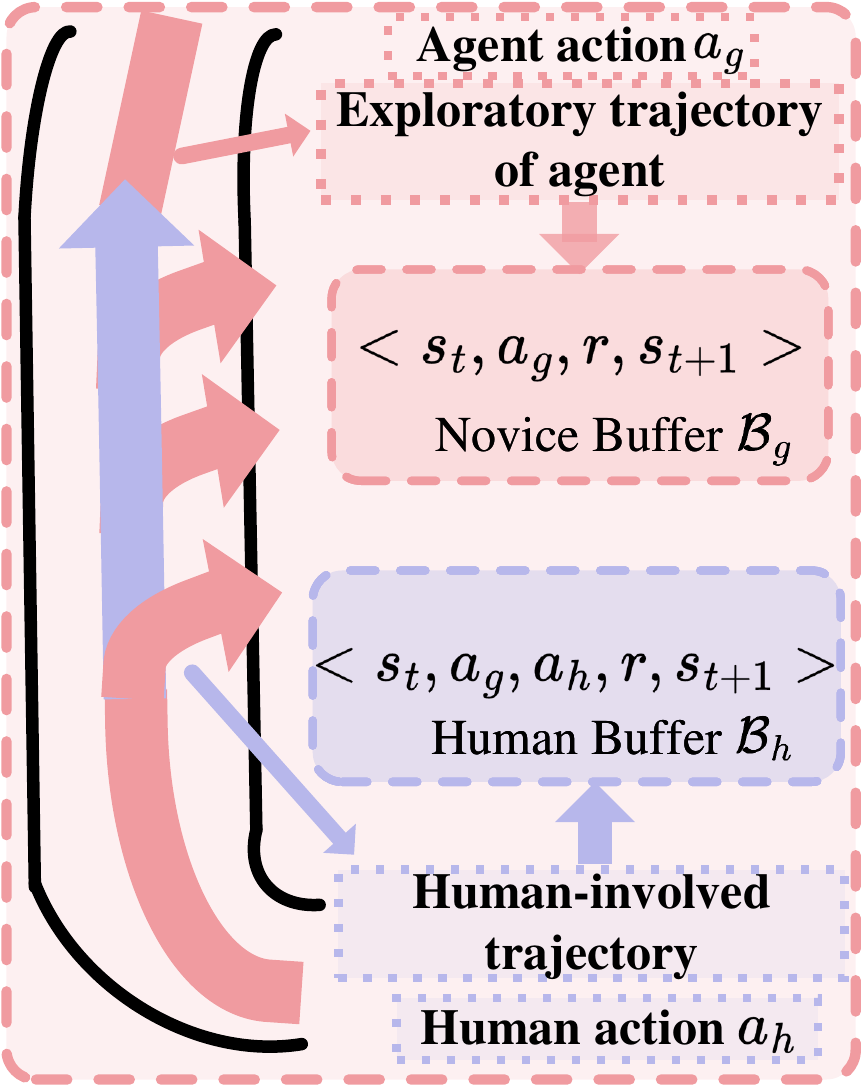}}%
  \hspace{0.1cm}%Human expert oversees the agent-environment interactions
  \subfloat[Label value distribution through buffers.]{\includegraphics[width=1.85in]{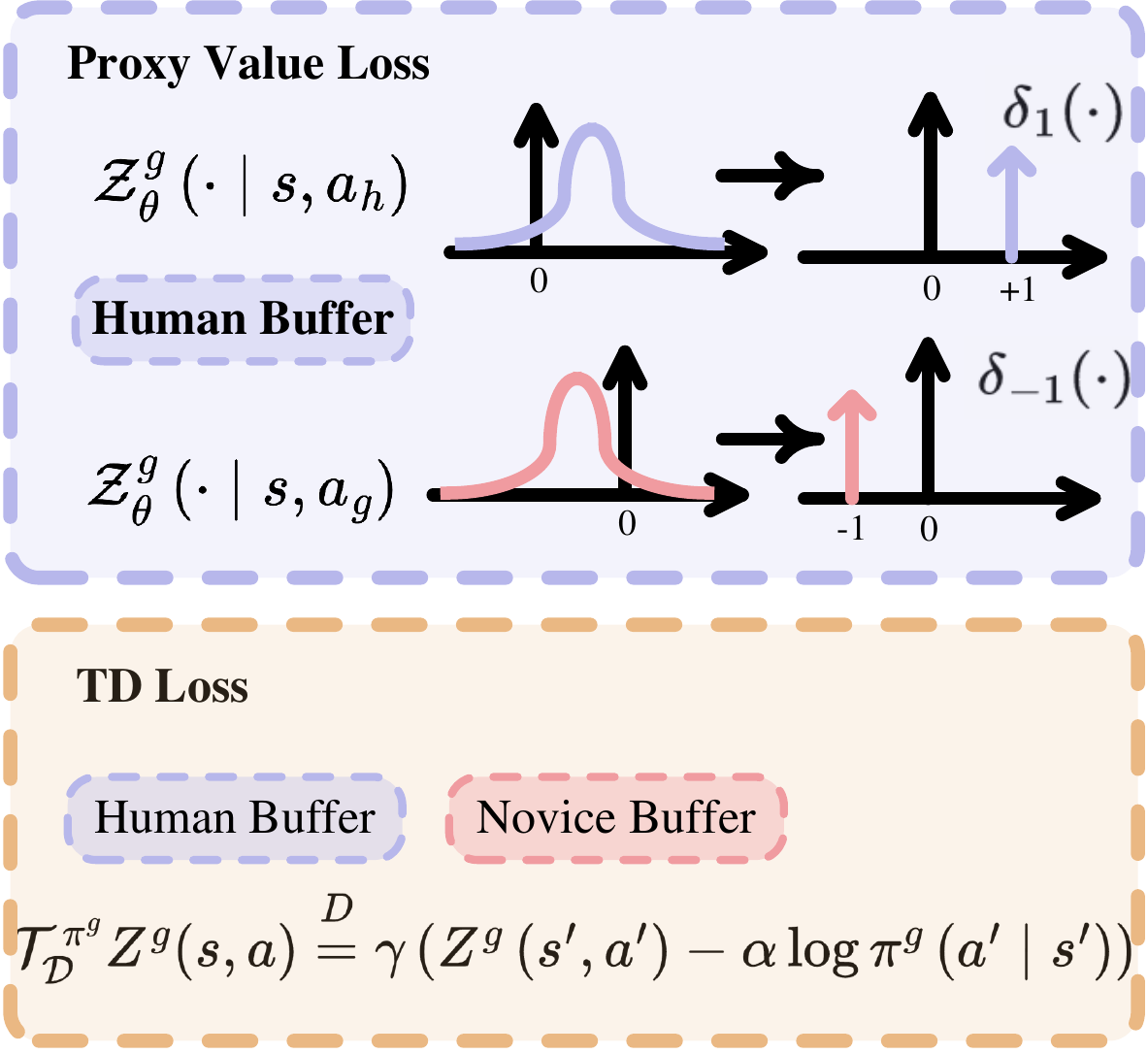}}%
  \caption{Illustration of Distributional Proxy Value Propagation.}%Label value distribution through buffers
  \label{fig_dpvp}
  \end{figure}

\indent The HAC, which effective in guiding agents to exhibit human-like behaviors, is heavily reliant on human intervention. To alleviate this burden on human operators and enhance training efficiency, the D-PVP is introduced. D-PVP builds upon PVP, which manipulates the \(Q\) value to promote desired behaviors. Specifically, PVP assigns a value of \(+1\) to human actions and \(-1\) to novice actions, thereby encouraging agents to learn human-aligned strategies. The original PVP, as presented in \cite{PVP}, is based on the twin delayed deep deterministic policy gradient (TD3) algorithm, which employs a deterministic policy gradient. To enable a more general and efficient learning process, and to provide an estimate of policy confidence, PVP is extended to the DSAC framework. In this extended version, the distribution of soft state-action returns in DSAC is leveraged to induce the desired behaviors.\\
\indent Firstly, we define the soft state-action return as
\begin{equation}
  Z^\pi\left(s_t, a_t\right):=r_t+\gamma \sum_{i=t}^{\infty} \gamma^{i-t}\left[r_i-\alpha \log \pi\left(a_i \mid s_i\right)\right].
  \end{equation}
% \indent It can be observed with the (\ref{soft_q_value}) that $Q^\pi(s, a)=\mathbb{E}\left[Z^\pi(s, a)\right]$. Then in order to model the distribution of random variable $Z^\pi(s, a)$, the function $\mathcal{Z}^\pi\left(Z^\pi(s, a) \mid s, a\right)$ is defined as a mapping from $(s, a)$ to a distribution over $Z^\pi(s, a)$. Thus, the distributional version of the soft Bellman operator in (\ref{soft_policy_evaluation}) becomes:\\
\indent From (\ref{soft_q_value}), it follows that $Q^\pi(s, a)=\mathbb{E}\left[Z^\pi(s, a)\right]$. To represent the distribution of the random variable $Z^\pi(s, a)$, let  $\mathcal{Z}^\pi\left(Z^\pi(s, a) \mid s, a\right)$ denote the mapping from $(s, a)$ to a probability distribution over $Z^\pi(s, a)$. Consequently, the distributional version of the soft bellman operator in (\ref{soft_policy_evaluation}) becomes
\begin{equation}
  \mathcal{T}_{\mathcal{D}}^\pi Z(s, a) \stackrel{D}{=} r+\gamma\left(Z\left(s^{\prime}, a^{\prime}\right)-\alpha \log \pi\left(a^{\prime} \mid s^{\prime}\right)\right),
  \label{distributional_version_of_the_soft_Bellman_operator}
  \end{equation}
% where $s^{\prime} \sim p, a^{\prime} \sim \pi$, and $A \stackrel{D}{=} B$ indicates that two random variables, $A$ and $B$, have identical probability laws. For further usage, we define a reward-free distributional soft Bellman operator:
where $s^{\prime} \sim p, a^{\prime} \sim \pi$, and $A \stackrel{D}{=} B$ indicates that two random variables $A$ and $B$ share identical probability laws. For further usage, a reward-free distributional soft Bellman operator is defined as
\begin{equation}
  \widehat{\mathcal{T}}_{\mathcal{D}}^\pi Z(s, a) \stackrel{D}{=} \gamma\left(Z\left(s^{\prime}, a^{\prime}\right)-\alpha \log \pi\left(a^{\prime} \mid s^{\prime}\right)\right),
  \label{reward_free_distributional_version_of_the_soft_Bellman_operator}
  \end{equation}
and the return distribution is updated via
\begin{equation}
  \mathcal{Z}_{\text {new }}=\arg \min _{\mathcal{Z}} \underset{(s, a) \sim \rho_\pi}{\mathbb{E}}\left[D_{\mathrm{KL}}\left(\mathcal{T}_{\mathcal{D}}^\pi \mathcal{Z}_{\mathrm{old}}(\cdot \mid s, a), \mathcal{Z}(\cdot \mid s, a)\right)\right],
  \end{equation}
where $D_{\mathrm{KL}}$ is the Kullback-Leibler (KL) divergence.\\
% \indent In our D-PVP, there are two value distribution and one stochastic policy parameterized as $\mathcal{Z}^g_\theta(\cdot \mid s, a)$, $\mathcal{Z}^c_\zeta(\cdot \mid s, a)$ and $\pi^g_\phi(\cdot \mid s)$. $\theta$, $\zeta$ and $\phi$ are network parameters. $\mathcal{Z}^g_\theta(\cdot \mid s, a)$ is used for proxy value propagation, and $\mathcal{Z}^c_\zeta(\cdot \mid s, a)$ is used for policy confidence evaluation.  $\pi^g_\phi(\cdot \mid s)$ is the policy under human guidance. Here, we model these three parameterized functions as diagonal gaussian, with mean and standard deviation as their outputs.\\
\indent D-PVP relies on two value-distribution networks and one stochastic policy, parameterized by $\mathcal{Z}^g_\theta(\cdot \mid s, a)$, $\mathcal{Z}^c_\zeta(\cdot \mid s, a)$ and $\pi^g_\phi(\cdot \mid s)$. The parameters $\theta$, $\zeta$ and $\phi$ govern these networks. Specifically, the distribution $\mathcal{Z}^g_\theta$ is designed for proxy value propagation, while $\mathcal{Z}^c_\zeta$ supports policy confidence evaluation. The policy $\pi^g_\phi$ is guided by human actions. All three networks are modeled as diagonal Gaussian, outputting mean and standard deviation.\\
\indent Fig. \ref{fig_dpvp} illustrates the D-PVP. During training, a human subject supervises the agent-environment interactions. Those exploratory transitions by the agent are stored in the novice buffer $\mathcal{B}_g=\left\{\left(s, a_g, s^{\prime}, r\right)\right\}$. At any time, the human subject can intervene in the agent's free exploration by taking control using the device. During human involvement, both human and novice actions will be recorded into the human buffer $\mathcal{B}_h=\left\{\left(s, a_g, a_h, s^{\prime}, r\right)\right\}$. Meanwhile, the novice policy $\pi^g_\phi(\cdot \mid s)$ is updated following the D-PVP procedure.\\
\indent As illustrated in Fig. \ref{fig_dpvp}(b), to emulate human behavior and minimize intervention, D-PVP samples data $\left(s, a_g, a_h\right)$ from the human buffer and labels the value distribution of the human action $a_h$ with $\delta_{1}(\cdot)$ and the novice action $a_g$ with $\delta_{-1}$. Here $\delta_{1}(\cdot)$ and $\delta_{-1}(\cdot)$ represent the Dirac delta distribution centered at 1 and -1. This labeling fits $\mathcal{Z}^g_\theta(\cdot \mid s, a)$ via the following proxy value (PV) loss: 
\begin{equation}
  J_{\mathcal{Z}^g}^{PV}(\theta)=\left(J_{\mathcal{Z}^g}^H(\theta)+J_{\mathcal{Z}^g}^N(\theta)\right) I\left(s, a_g\right),
\end{equation}
where 
\begin{equation}
J_{\mathcal{Z}^g}^H(\theta)=\underset{(s, a_h,a_g) \sim \mathcal{B}_h}{\mathbb{E}}\left[D_{\mathrm{KL}}\left(\delta_{1}(\cdot), \mathcal{Z}^g_\theta(\cdot \mid s, a_h)\right)\right],
\end{equation}
\begin{equation}
  J_{\mathcal{Z}^g}^N(\theta)=\underset{(s, a_h,a_g) \sim \mathcal{B}_h}{\mathbb{E}}\left[D_{\mathrm{KL}}\left(\delta_{-1}(\cdot), \mathcal{Z}^g_\theta(\cdot \mid s, a_g)\right)\right].
  \end{equation}
\indent Since $\mathcal{Z}^g_\theta(\cdot \mid s, a)$ has the form of Gaussian, it is represented as $\mathcal{N}\left(Q_\theta(s, a), \sigma_\theta(s, a)^2\right)$, where $Q_\theta(s, a)$ and $\sigma_\theta(s, a)$ are the mean and standard deviation of return distribution. The update gradients of $J_{\mathcal{Z}^g}^H(\theta)$ and $J_{\mathcal{Z}^g}^N(\theta)$ are:

\begin{equation}
\begin{aligned}
\nabla_\theta J_{\mathcal{Z}^g}^H(\theta)=&\mathbb{E}\left[\nabla_\theta \frac{\left(1-Q_\theta(s, a_h)\right)^2}{2 \sigma_\theta(s, a_h)^2}+\eta \frac{\nabla_\theta \sigma_\theta(s, a_h)}{\sigma_\theta(s, a_h)}\right] \\
=&\mathbb{E}[-\frac{\left(1-Q_\theta(s, a_h)\right)}{\sigma_\theta(s, a_h)^2} \nabla_\theta Q_\theta(s, a_h) \\
&-\frac{\left(1-Q_\theta(s, a_h)\right)^2-\sigma_\theta(s, a_h)^2}{\sigma_\theta(s, a_h)^3} \eta \nabla_\theta \sigma_\theta(s, a_h)],
\end{aligned}
\end{equation}

  and
  \begin{equation}
    \begin{aligned}
    \nabla_\theta J_{\mathcal{Z}^g}^N(\theta)=&\mathbb{E}\left[\nabla_\theta \frac{\left(-1-Q_\theta(s, a_g)\right)^2}{2 \sigma_\theta(s, a_g)^2}+\eta \frac{\nabla_\theta \sigma_\theta(s, a_g)}{\sigma_\theta(s, a_g)}\right] \\
    =&\mathbb{E}\left[\frac{\left(1+Q_\theta(s, a_g)\right)}{\sigma_\theta(s, a_g)^2} \nabla_\theta Q_\theta(s, a_g)\right.\\
&\left.-\frac{\left(1+Q_\theta(s, a_g)\right)^2-\sigma_\theta(s, a_g)^2}{\sigma_\theta(s, a_g)^3} \eta \nabla_\theta \sigma_\theta(s, a_g)\right],
    \end{aligned}
    \end{equation}
here $\eta$ modulates the variance convergence rate.\\
\indent The transitions stored in the novice buffer, though devoid of direct human intervention, still encapsulate valuable insights regarding forward dynamics and human preferences. Rather than discarding these data, D-PVP propagates proxy values to these states through a reward-free TD update. As depicted in Fig. \ref{fig_dpvp}(b), the reward-free TD loss admits
\begin{equation}
  J^{TD}_{\mathcal{Z}^g}(\theta)=\underset{(s, a) \sim \mathcal{B}}{\mathbb{E}}\left[D_{\mathrm{KL}}\left(\widehat{\mathcal{T}}_{\mathcal{D}}^{\pi^g_{\bar{\Phi}}} \mathcal{Z}^g_{\bar{\theta}}(\cdot \mid s, a), \mathcal{Z}^g_\theta(\cdot \mid s, a)\right)\right],
  \label{reward_free_td_loss}
  \end{equation}
where $\bar{\theta}$ and $\bar{\phi}$ are the target-network parameters, and $\mathcal{B}$ refers to $\mathcal{B}_g\cup \mathcal{B}_h$. Since $\widehat{\mathcal{T}}_{\mathcal{D}}^{\pi^g_{\bar{\Phi}}} \mathcal{Z}^g_{\bar{\theta}}(\cdot \mid s, a)$ is unknown, a sample-based version of (\ref{reward_free_td_loss}) is applied:
\begin{equation}
  J^{TD}_{\mathcal{Z}^g}(\theta)=-\underset{\substack{\left(s, a, s^{\prime}\right) \sim \mathcal{B}, a^{\prime} \sim \pi^g_{\bar{\Phi}},\\ Z^g\left(s^{\prime}, a^{\prime}\right) \sim \mathcal{Z}^g_{\bar{\theta}}\left(\cdot \mid s^{\prime}, a^{\prime}\right)}}{\mathbb{E}}\left[\log \mathcal{P}\left(\widehat{y}_z \mid \mathcal{Z}^g_\theta(\cdot \mid s, a)\right)\right],
  \label{kl_update}
  \end{equation}
with the reward-free target value
\begin{equation}
  \widehat{y}_z=\gamma\left(Z^g\left(s^{\prime}, a^{\prime}\right)-\alpha \log \pi^g_{\bar{\phi}}\left(a^{\prime} \mid s^{\prime}\right)\right).
  \end{equation}
\indent The corresponding update gradient is
\begin{equation}
  \begin{aligned}
  \nabla_\theta J_{\mathcal{Z}^g}^{TD}(\theta)=&\mathbb{E}\left[\nabla_\theta \frac{\left(\widehat{y}_z-Q_\theta(s, a)\right)^2}{2 \sigma_\theta(s, a)^2}+\eta \frac{\nabla_\theta \sigma_\theta(s, a)}{\sigma_\theta(s, a)}\right] \\
  =&\mathbb{E}[-\frac{\left(\widehat{y}_z-Q_\theta(s, a)\right)}{\sigma_\theta(s, a)^2} \nabla_\theta Q_\theta(s, a) \\
  &-\frac{\left(\widehat{y}_z-Q_\theta(s, a)\right)^2-\sigma_\theta(s, a)^2}{\sigma_\theta(s, a)^3} \eta \nabla_\theta \sigma_\theta(s, a)].
  \end{aligned}
  \end{equation}
  \indent Hence, the final value loss for $\mathcal{Z}^g_\theta(\cdot \mid s, a)$ integrates both PV loss and TD loss:
\begin{equation}
J_{\mathcal{Z}^g}(\theta)=J_{\mathcal{Z}^g}^{PV}(\theta)+J_{\mathcal{Z}^g}^{TD}(\theta).
\label{final_value_loss}
\end{equation}
\indent For policy improvement, the actor  $\pi^g_\phi(\cdot \mid s)$ is optimized by maximizing the return distribution
\begin{equation}
\begin{aligned}
J_{\pi^g}(\phi) & =\underset{\substack{s \sim \mathcal{B},\\ a \sim \pi^g_\phi}}{\mathbb{E}}\left[\underset{Z^g(s, a) \sim \mathcal{Z}^g_\theta(\cdot \mid s, a)}{\mathbb{E}}[Z^g(s, a)]-\alpha \log \left(\pi^g_\phi(a \mid s)\right)\right] \\
& =\underset{s \sim \mathcal{B}, a \sim \pi_\phi^g}{\mathbb{E}}\left[Q_\theta(s, a)-\alpha \log \left(\pi^g_\phi(a \mid s)\right)\right],
\end{aligned}
\label{policy_improvement}
\end{equation}
while the temperature $\alpha$ is updated to balance exploration and exploitation:
\begin{equation}
\alpha \leftarrow \alpha- \mathbb{E}_{s \sim \mathcal{B}, a \sim \pi^g_\phi}\left[-\log \pi^g_\phi(a \mid s)-\overline{\mathcal{H}}\right].
\end{equation}

\indent The expected cumulative probability of failure $V_{\pi^b}$ of the behavior policy $\pi^b$ in D-PVP can be upper-bounded by:
\begin{equation}
\label{eq:risk_bound}
    V_{\pi^b} \;\le\; \frac{1}{1-\gamma}
    \Bigl(
      \epsilon + \kappa \;+\; \frac{\gamma\,\epsilon^2}{1-\gamma}\,K'
    \Bigr),
\end{equation}
where $\epsilon < 1$ and $\kappa < 1$ are small constants representing, respectively, the probability that the human expert provides an unsafe action and the probability that the expert fails to intervene when the agent takes an unsafe action. The term $K' \ge 0$ captures the human expert's tolerance, corresponding to the measure of the action set in which the expert can intervene. For further details and the derivation of this upper bound, please refer to \cite{HAIM}.
\subsection{Shared Control Mechanism with Policy Confidence Evaluation}

\indent Exclusive reliance on human guidance can lead to suboptimal policy outcomes. Therefore, it is essential to enable agents to explore independently and continuously improve their performance beyond human guidance. This can be achieved by incorporating a reward signal into the learning process, allowing the agent to learn from the environment. However, directly adding the reward signal throughout the entire training can destabilize the learning process due to discrepancies between proxy values and reward signals. Alternatively, introducing the reward signal after the agent has learned a basic driving policy from human demonstrations may result in significant performance degradation. This occurs because the native reward function might not align with human preferences, hindering the retention of previously learned human policies.\\
\indent To address these challenges, our C-HAC framework puts forward a shared control mechanism that combines the learned human-guided policy with a self-learning policy. The shared control mechanism is defined as:
\begin{equation}
  \pi^b\left( a \mid s\right) =  \pi^r_\psi\left( a \mid s\right)\left( 1-\mathcal{T}_{c}(s) \right) + \pi^g_\phi\left( a \mid s\right) \mathcal{T}_{c}(s) .
  \label{shared_control_mechanism}
  \end{equation}
\indent Here, $\mathcal{T}_{\text {c}}$ denotes the confidence-based intervention function. The policy $\pi^g_\phi\left( s, a\right)$ represents the human-guided policy learned from human demonstrations through D-PVP, while $\pi^r_\psi\left( a \mid s\right)$  is the self-learning policy derived from the human-guided policy but optimized to maximize cumulative rewards. The self-learning policy is improved by maximizing the return distribution:
\begin{equation}
  \begin{aligned}
  J_{\pi^r}(\psi) & =\underset{s \sim \mathcal{B}, a \sim \pi_\psi^r}{\mathbb{E}}\bigg[\underset{Z^c(s, a) \sim \mathcal{Z}^c_\zeta(\cdot \mid s, a)}{\mathbb{E}}[Z^c(s, a)]\\
  &\ \ \ \ \ \ \ \ \ \ \ \ \ \ \ \ \ \ \ \ \ \ \ \ \ \ \ -\alpha \log \left(\pi_\psi^r(a \mid s)\right)\bigg] \\
  & =\underset{s \sim \mathcal{B}, a \sim \pi_\psi^r}{\mathbb{E}}\left[Q_\zeta(s, a)-\alpha \log \left(\pi_\psi^r(a \mid s)\right)\right].
  \end{aligned}
  \label{pi_r_loss}
  \end{equation}
\indent The distribution $\mathcal{Z}^c_\zeta(\cdot \mid s, a)$ is updated using the sample-based TD loss:
\begin{equation}
  J^{TD}_{\mathcal{Z}^c}(\zeta)=-\underset{\substack{\left(s, a, r, s^{\prime}\right) \sim \mathcal{B}, a^{\prime} \sim \pi^r_{\bar{\psi}},\\ Z^c\left(s^{\prime}, a^{\prime}\right) \sim \mathcal{Z}^c_{\bar{\zeta}}\left(\cdot \mid s^{\prime}, a^{\prime}\right)}}{\mathbb{E}}\left[\log \mathcal{P}\left(y_z^r \mid \mathcal{Z}^c_\zeta(\cdot \mid s, a)\right)\right],
  \label{Z_c_td_loss}
  \end{equation}
  where the target value $y_z^r$ is defined as:
  \begin{equation}
    y_z^r=r+\gamma\left(Z^c\left(s^{\prime}, a^{\prime}\right)-\alpha \log \pi^r_{\bar{\psi}}\left(a^{\prime} \mid s^{\prime}\right)\right).
    \label{value_target}
    \end{equation}
\indent This mechanism enables the agent to exploit potentially superior reward-based strategies while retaining the essential safety and efficiency derived from human demonstrations. A critical component of this mechanism is the policy confidence evaluation, that determines which policy to follow at each step. The confidence evaluation utilizes the return distribution $\mathcal{Z}^c_\zeta\left(s, a\right)$. By updating with the value target in (\ref{value_target}), $\mathcal{Z}^c_\zeta\left(s, a\right)$ estimates the distribution of cumulative returns given states and actions. Within the shared control mechanism, for each state-action pair $\left(s, a_g\right)$ associated with the human-guided policy and $\left(s, a_r\right)$ associated with the self-learning policy, $\mathcal{Z}^c_\zeta\left(s, a\right)$ outputs the distributions of cumulative rewards as:
\begin{equation}
  \begin{aligned}
  & \mathcal{Z}^c_\zeta\left(s, a_r\right) \rightarrow \mathcal{N}\left(Q_\zeta\left(s, a_r\right), \sigma_\zeta\left(s, a_r\right)^2\right), \\
  & \mathcal{Z}^c_\zeta\left(s, a_g\right) \rightarrow \mathcal{N}\left(Q_\zeta\left(s, a_g\right), \sigma_\zeta\left(s, a_g\right)^2\right).
  \end{aligned}
  \end{equation}
  \indent Here, $Q_\zeta\left(s, a_r\right)$ and $Q_\zeta\left(s, a_g\right)$ represent the expected cumulative returns for the reward-guided and human-guided policies, respectively. The standard deviations $\sigma_\zeta\left(s, a_r\right)$ and $\sigma_\zeta\left(s, a_g\right)$ indicate the uncertainties in the cumulative reward estimates for each policy. For simplicity, $Q_\zeta\left(s, a_r\right)$ and $Q_\zeta\left(s, a_g\right)$ are denoted as $Q_\zeta^r$ and $Q_\zeta^g$, respectively. Similarly, $\sigma_\zeta\left(s, a_r\right)$ and $\sigma_\zeta\left(s, a_g\right)$ are denoted as $\sigma_\zeta^r$ and $\sigma_\zeta^g$\\
  \indent Using these outputs, the confidence that the self-learning policy outperforms the human-guided policy in state $s$ is calculated as:
  \begin{equation}
    P\left(Q_\zeta^r>Q_\zeta^g\right)=1-\Phi\left(\frac{Q_\zeta^g-Q_\zeta^r}{\sqrt{\left(\sigma_\zeta^r\right)^2+\left(\sigma_\zeta^g\right)^2}}\right),
    \end{equation}
where $\Phi(\cdot)$ is the cumulative distribution function (CDF) of the standard normal distribution.\\
\indent Based on the confidence evaluation, the confidence-based intervention function $\mathcal{T}_{c}$ determines which policy to follow in a given state $s$:
\begin{equation}
  \mathcal{T}_{c}(s)= \begin{cases}1 & \text { if } P\left(Q_\zeta^{r}>Q_\zeta^{g}\right) \leq  1-\delta, \\ 0 & \text { otherwise. }\end{cases}
  \label{confidence intervention}
  \end{equation}
\indent If the confidence probability exceeds the threshold $1-\delta$, the reward-guided policy $\pi^r_\psi$ will be selected. Otherwise, the human-guided policy $\pi^g_\phi$ is used.
\begin{assumption}
  (Bounded variance of return distributions.)  To simplify the analysis, the variances of the return distributions for both the human-guided policy ($\pi^g$) and the reward-guided policy ($\pi^r$) are bounded by a constant $\sigma_{\max}$. Specifically:
$$
\sigma_g^2 \leq \sigma_{\max }^2, \quad \sigma_r^2 \leq \sigma_{\max }^2.
$$
\end{assumption}
\begin{theorem}
With the confidence-based intervention function $\mathcal{T}_{\text {c}}(s)$ defined in (\ref{confidence intervention}), the return of the behavior policy $\pi^b$ is guaranteed to be no worse than:
\begin{equation}
J\left(\pi^b\right) \geq J\left(\pi^g\right)-(1-\beta) \cdot \frac{\sqrt{2} \cdot \sigma_{\max } \cdot \Phi^{-1}(\delta)}{1-\gamma},
\end{equation}
where $\beta$ is the expected intervention rate weighted by the policy discrepancy.
\end{theorem}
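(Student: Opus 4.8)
The plan is to compare $\pi^b$ with $\pi^g$ through the performance difference lemma, to localize their discrepancy to the states where the controller commits to $\pi^r$ using the rule (\ref{confidence intervention}), and then to bound that discrepancy with the Gaussian confidence test and Assumption 1. I would begin from the identity
\[
J(\pi^b)-J(\pi^g)=\frac{1}{1-\gamma}\,\mathbb{E}_{s\sim d^{\pi^b}}\Big[\,\mathbb{E}_{a\sim\pi^b(\cdot\mid s)}A^{\pi^g}(s,a)\Big],
\]
with $A^{\pi^g}(s,a)=Q^{\pi^g}(s,a)-V^{\pi^g}(s)$ and $d^{\pi^b}$ the discounted occupancy of the shared policy. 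The prefactor $1/(1-\gamma)$ already matches the target, so the task reduces to lower-bounding the expected advantage.

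Next I would split the state expectation according to $\mathcal{T}_c$. On $\{\mathcal{T}_c(s)=1\}$ the shared-control law (\ref{shared_control_mechanism}) forces $\pi^b(\cdot\mid s)=\pi^g(\cdot\mid s)$, hence $\mathbb{E}_{a\sim\pi^g}A^{\pi^g}(s,a)=0$ and these states contribute nothing. The whole gap is therefore carried by the switching set $\{\mathcal{T}_c(s)=0\}$, on which $\pi^b=\pi^r$; the discounted mass of this set, reweighted by the per-state action gap that converts $\mathbb{E}_{a\sim\pi^r}A^{\pi^g}$ into a signed magnitude, is what the statement abbreviates as $(1-\beta)$.

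On the switching set I would turn the probabilistic test into a deterministic inequality. Modeling the two return heads as independent Gaussians gives $Q^r-Q^g\sim\mathcal{N}\big(Q_\zeta^r-Q_\zeta^g,\ \sigma_r^2+\sigma_g^2\big)$, so $\mathcal{T}_c(s)=0$ is equivalent to $Q_\zeta^r-Q_\zeta^g>\sqrt{\sigma_r^2+\sigma_g^2}\,\Phi^{-1}(1-\delta)$: the controller commits to $\pi^r$ only once the mean gain clears the confidence band $\sqrt{\sigma_r^2+\sigma_g^2}\,\Phi^{-1}(1-\delta)$. Assumption 1 caps this band by $\sqrt{2}\,\sigma_{\max}\,\Phi^{-1}(1-\delta)=-\sqrt{2}\,\sigma_{\max}\,\Phi^{-1}(\delta)$, so the one-sided gap between the switching action and $\pi^g$ at any such state is controlled by this single quantity. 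Summing the discounted per-state gaps over the switching set with weight $(1-\beta)$ and dividing by $1-\gamma$ then assembles exactly the offset $(1-\beta)\,\sqrt{2}\,\sigma_{\max}\,\Phi^{-1}(\delta)/(1-\gamma)$ that appears in the claim.

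I expect the main obstacle to be the mismatch between the critic and the reference value. The head $\mathcal{Z}^c_\zeta$ is trained with the reward-guided target (\ref{value_target}), so $Q_\zeta^g$ is the human action evaluated under $\pi^r$, i.e.\ $Q^{\pi^r}(s,a_g)$, whereas the performance difference lemma requires the human action evaluated under $\pi^g$, i.e.\ $A^{\pi^g}$. Reconciling the two---through a convergence/consistency assumption that makes the critics agree on the shared support, or by folding their discrepancy into the weight defining $\beta$---is the delicate step, and it is also where the direction of the final inequality (equivalently, the sign convention carried by $\Phi^{-1}(\delta)$) must be fixed. Once the advantage is rewritten in the quantities the critic actually tracks, the one-sided Gaussian tail estimate and Assumption 1 close the argument immediately.
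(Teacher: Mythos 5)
Your proposal matches the paper's proof essentially step for step: the paper invokes the TRPO performance-difference lemma (its Lemma 2), observes that states with $\mathcal{T}_c(s)=1$ contribute zero advantage since there $\pi^b=\pi^g$, absorbs the remaining switching-set contribution into the weight $(1-\beta)$ from Lemma 1, bounds the per-state gap by $\sqrt{\sigma_r^2+\sigma_g^2}\,\Phi^{-1}(\delta)$ via the Gaussian confidence test and Assumption 1, and closes with the geometric series, exactly as you outline. The critic mismatch you flag at the end---$\mathcal{Z}^c_\zeta$ is trained on reward-guided targets so $Q_\zeta$ tracks $Q^{\pi^r}$, while the performance-difference lemma needs advantages under $\pi^g$---is a genuine gap, but the paper's own proof silently glosses over it (writing the test directly as $P(Q^g(s,a_r)>Q^g(s,a_g))$), so your account is, if anything, more careful than the original.
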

\begin{proof}
  To prove Theorem 1, several useful lemmas are introduced.
%   \begin{lemma}
%   The state distribution diguided policy $\pi^g$ and the reward-guided policy $\pi^r$ is bounded by their expected policy discrepancy (Lemma 4.1 in \cite{lemma1}):
%   \begin{equation}
%     \left\|d_{\pi^g}-d_{\pi^r}\right\|_1 \leq \frac{\gamma}{1-\gamma} \mathbb{E}_{s \sim d_{\pi^g}}\left\|\pi^g(\cdot \mid s)-\pi^r(\cdot \mid s)\right\|_1 .
%     \end{equation}
% \end{lemma}
\begin{lemma}
  For behavior policy $\pi^b$ deduced by human-guided policy $\pi^g$, reward-guided policy $\pi^r$ and an intervention function $\mathcal{T}_c(s)$, the state distribution discrepancy between $\pi^b$ and $\pi^r$ is bounded by (Theorem 3.2 in \cite{lemma2})
  \begin{equation}
    \left\|d_{\pi^b}-d_{\pi^r}\right\|_1 \leq \frac{\beta \gamma}{1-\gamma} \mathbb{E}_{s \sim d_{\pi^b}}\left\|\pi^g(\cdot \mid s)-\pi^r(\cdot \mid s)\right\|_1,
    \end{equation}
    where $\beta=\frac{\mathbb{E}_{s \sim d_{\pi^b}}\left[\mathcal{T}(s)\left\|\pi^g(\cdot \mid s)-\pi^r(\cdot \mid s)\right\|_1\right]}{\mathbb{E}_{s \sim d_{\pi^b}}\left\|\pi^g(\cdot \mid s)-\pi^r(\cdot \mid s)\right\|_1} \in[0,1]$ is the expected intervention rate weighted by the policy discrepancy.
\end{lemma}

\begin{lemma}
  The performance difference between two policies in terms of the advantage function can be expressed as (Equation 3 in \cite{trpo}):
  \begin{equation}
    J(\pi)=J\left(\pi^{\prime}\right)+\mathbb{E}_{s_t, a_t \sim \tau_\pi}\left[\sum_{t=0}^{\infty} \gamma^t A_{\pi^{\prime}}\left(s_t, a_t\right)\right].
    \end{equation}
\end{lemma}
\indent With Lemma 2, the proof of Theorem 2 is given as follows.
\begin{equation}
  \begin{aligned}
    & J\left(\pi^b\right)-J\left(\pi^g\right)\\ 
    & =\underset{s_n, a_n \sim \tau_{\pi^b}}{\mathbb{E}}\left[\sum_{n=0}^{\infty} \gamma^n A^g\left(s_n, a_n\right)\right] \\
    & =\underset{s_n, a_n \sim \tau_{\pi^b}}{\mathbb{E}}\left[\sum_{n=0}^{\infty} \gamma^n\left[Q^g\left(s_n, a_n\right)-V^g\left(s_n\right)\right]\right] \\
    & =\underset{s_n\sim \tau_{\pi^b}}{\mathbb{E}}\left[\sum_{n=0}^{\infty} \gamma^n\left[\mathbb{E}_{a \sim \pi^b\left(\cdot \mid s_n\right)} Q^g\left(s_n, a\right)-V^g\left(s_n\right)\right]\right] \\
    & =\underset{s_n\sim \tau_{\pi^b}}{\mathbb{E}}\bigg[\sum_{n=0}^{\infty} \gamma^n\left[\mathcal{T}_c\left(s_n\right) \mathbb{E}_{a \sim \pi^g\left(\cdot \mid s_n\right)} Q^g\left(s_n, a\right) \right. \\
    & \quad\quad\quad\left. + \left(1-\mathcal{T}_c\left(s_n\right)\right) \mathbb{E}_{a \sim \pi^r\left(\cdot \mid s_n\right)} Q^g\left(s_n, a\right)-V^g\left(s_n\right)\right]\bigg] \\
    & =\underset{s_n\sim \tau_{\pi^b}}{\mathbb{E}}\bigg[\sum_{n=0}^{\infty} \gamma^n\left[\left(1-\mathcal{T}_c\left(s_n\right)\right)\left[\mathbb{E}_{a \sim \pi^r\left(\cdot \mid s_n\right)} Q^g\left(s_n, a\right)\right.\right.\\
    &\quad\quad\quad\quad\quad\quad\quad\quad\quad\quad\quad\quad\quad\quad\quad\quad\quad\quad\left.\left.-V^g\left(s_n\right)\right]\right]\bigg] \\
    & =(1-\beta) \underset{s_n\sim \tau_{\pi^b}}{\mathbb{E}}\bigg[\sum_{n=0}^{\infty} \gamma^n\left[\mathbb{E}_{a \sim \pi^r\left(\cdot \mid s_n\right)} Q^g\left(s_n, a\right)\right.\\
    &\quad\quad\quad\quad\quad\quad\quad\quad\quad\quad\quad\quad\quad\quad\quad\quad\quad\quad\left.-V^g\left(s_n\right)\right]\bigg].
  \end{aligned}
\end{equation}
\indent Note that the choice of $\pi^r$ is guided by the confidence probability $P\left(Q^g\left(s, a_r\right)>Q^g\left(s, a_g\right)\right)$. When the human-guided policy is selected, its expected advantage over the human-guided policy can be expressed as:

\begin{equation}
\begin{aligned}
\mathbb{E}_{a \sim \pi^r\left(\cdot \mid s_n\right)}& Q^g\left(s_n, a\right)-V^g\left(s_n\right)\\
&=\mathbb{E}_{a \sim \pi^r\left(\cdot \mid s_n\right)} Q^g\left(s_n, a\right)-\mathbb{E}_{a \sim \pi^g\left(\cdot \mid s_n\right)} Q^g\left(s_n, a\right).
\end{aligned}
\end{equation}

Using the confidence-based condition for selecting human-guided policy:

$$
P\left(Q^g\left(s, a_r\right)>Q^g\left(s, a_g\right)\right) \leq 1-\delta,
$$
we have:
\begin{equation}
\begin{aligned}
\mathbb{E}_{a \sim \pi^r\left(\cdot \mid s_n\right)} Q^g\left(s_n, a\right)-\mathbb{E}_{a \sim \pi^g\left(\cdot \mid s_n\right)} &Q^g\left(s_n, a\right)\\
&\leq \sqrt{\sigma_r^2+\sigma_g^2} \cdot \Phi^{-1}(\delta),
\end{aligned}
\end{equation}
where $\Phi^{-1}(\delta)$ is the inverse CDF of the standard normal distribution, and $\sigma_r^2, \sigma_g^2$ are the variances of the value-distribution for $\pi^r$ and $\pi^g$, respectively.\\
\indent Substituting the lower bound of the Q-value difference into the performance difference yields

\begin{equation}
\begin{aligned}
J\left(\pi^b\right)-&J\left(\pi^g\right) \\
&\geq -(1-\beta) \mathbb{E}_{s_n \sim \tau_{\pi^b}}\left[\sum_{n=0}^{\infty} \gamma^n \cdot \sqrt{\sigma_r^2+\sigma_h^2} \cdot \Phi^{-1}(\delta)\right].
\end{aligned}
\end{equation}
\indent By applying the geometric series formula and variances bound of the value-distribution, the performance difference becomes:

$$
J\left(\pi^b\right)-J\left(\pi^g\right) \geq -(1-\beta) \cdot \frac{\sqrt{2} \cdot \sigma_{\max } \cdot \Phi^{-1}(\delta)}{1-\gamma} .
$$

Thus, the final lower bound for the mixed policy satisfies

$$
J\left(\pi^b\right) \geq J\left(\pi^g\right)-(1-\beta) \cdot \frac{\sqrt{2} \cdot \sigma_{\max } \cdot \Phi^{-1}(\delta)}{1-\gamma}.
$$
\end{proof}
\subsection{Overall Training Process}
During the human demonstration stage, the agent employs two value distribution networks, $\mathcal{Z}_\theta^g(\cdot \mid s, a)$ and $\mathcal{Z}^c_\zeta(\cdot \mid s, a)$, as well as the policy network $\pi^g_\phi(\cdot \mid s)$. Specifically, in each training iteration, two equally-sized batches, $b_g$ and $b_h$, are sampled from $\mathcal{B}_g$ and $\mathcal{B}_h$, respectively. The agent updates the value distribution network $\mathcal{Z}^g_\theta(\cdot \mid s, a)$ and the policy network $\pi_\phi^g(\cdot \mid s)$ by minimizing the loss functions in (\ref{final_value_loss}) and (\ref{policy_improvement}). As mentioned earlier, the update of $\mathcal{Z}^g_\theta(\cdot \mid s, a)$ is reward-free. Concurrently, the value distribution network $\mathcal{Z}_\zeta^c(\cdot \mid s, a)$ is updated by minimizing the loss:
\begin{equation}
J^{TD}_{\mathcal{Z}^c}(\zeta)=-\underset{\substack{\left(s, a_g, r, s^{\prime}\right) \sim \mathcal{B}_g, a^{\prime} \sim \pi^g_{\bar{\phi}},\\ Z^c\left(s^{\prime}, a^{\prime}\right) \sim \mathcal{Z}^c_{\bar{\zeta}}\left(\cdot \mid s^{\prime}, a^{\prime}\right)}}{\mathbb{E}}\left[\log \mathcal{P}\left(y_z^g \mid \mathcal{Z}^c_\zeta(\cdot \mid s, a)\right)\right],
\end{equation}
where the target value $y_z^g$ is defined as:
\begin{equation}
  y_z^g=r+\gamma\left(Z^c\left(s^{\prime}, a^{\prime}\right)-\alpha \log \pi^g_{\bar{\phi}}\left(a^{\prime} \mid s^{\prime}\right)\right).
  \label{confidence_value_target}
  \end{equation}
\indent Once the agent has developed sufficient confidence in its policy evaluation capability and has learned a basic driving policy, it transitions to the RL continuous enhancement stage. The transition conditions are as follows:\\
\begin{equation}
  \left\{\begin{array}{l}\sigma_{\text {mean }}^c<\vartheta_c,
    \\ \log \left(\pi^g_\phi(a \mid s)\right)<\kappa,
    \\ n_{\text {steps }}>N_g.
    \end{array}\right.
\end{equation}
\indent Here, $\sigma_{\text{mean}}^c < \vartheta_c$ indicates that the variance of the policy evaluation network has dropped below a predefined threshold, signifying sufficient confidence in policy evaluation. The condition $\log \pi^g_\phi(a \mid s) < \kappa$ implies that the agent has learned a basic driving policy. $N_g$ is a predefined threshold that specifies the minimum number of training steps required before the agent can transition to the second stage.\\
\indent Upon entering the RL continuous enhancement stage, the self-learning policy $\pi_\psi^r(a \mid s)$ is initialized with the weight copied from the human-guided policy $\pi_\phi^g(a \mid s)$. The share control mechanism in (\ref{shared_control_mechanism}) is activated. During the training process, the agent samples a batch $b_g$ from the novice buffer $\mathcal{B}_g$ and updates the value distribution network $\mathcal{Z}^c_\zeta(\cdot \mid s, a)$ and the policy network $\pi_\psi^r(a \mid s)$ by minimizing the loss functions in (\ref{Z_c_td_loss}) and (\ref{pi_r_loss}), respectively. The temperature parameter $\alpha$ is updated according to (\ref{soft_policy_improvement}). And the confidence-based intervention function $\mathcal{T}_{\text {c}}(s)$ is calculated using (\ref{confidence intervention}). The agent then alternates between human-guided policy and self-learning policy until the training process is complete.\\
\indent Algorithm \ref{alg:chac} gives the details steps for the proposed C-HAC approach.\\
\begin{algorithm}[!ht]
  \caption{Confidence-guided human-AI collaboration (C-HAC)}
  \label{alg:chac}
  \begin{algorithmic}[1]
      \STATE \textbf{Initialize:} $\pi^g_\phi$, $\pi^r_\psi$, $\mathcal{Z}^g_\theta$, $\mathcal{Z}^c_\zeta$, $\mathcal{B}_g$, $\mathcal{B}_h$
      \STATE \textbf{Set:} Temperature parameter $\alpha$
      
      \STATE \textbf{Stage 1: Human Demonstration Learning}
      \WHILE{Not ready to transition}
          \STATE Execute action $a_g \sim \pi^g_\phi(\cdot|s)$ in environment
          \STATE Observe next state $s'$, reward $r$
          \STATE Store transition $(s, a_g, s', r)$ in $\mathcal{B}_g$
          
          \IF{Human intervention occurs}
              \STATE Store $(s, a_g, a_h, s', r)$ in $\mathcal{B}_h$
          \ENDIF
          
          \STATE Sample mini-batches $b_g \sim \mathcal{B}_g$, $b_h \sim \mathcal{B}_h$
          \STATE Update $\mathcal{Z}^g_\theta$ via Proxy Value Loss (\ref{final_value_loss})
          \STATE Update $\pi^g_\phi$ via Policy Improvement (\ref{policy_improvement})
          \STATE Update $\mathcal{Z}^c_\zeta$ via TD Loss (\ref{confidence_value_target})
      \ENDWHILE
      
      \STATE \textbf{Stage 2: RL Continuous Enhancement}
      \STATE Initialize $\pi^r_\psi$ with parameters from $\pi^g_\phi$
      
      \WHILE{Training is not complete}
          \STATE \textbf{Action Selection:}
          \STATE \quad With probability $1 - \mathcal{T}_c(s)$, choose $a \sim \pi^r_\psi(\cdot|s)$
          \STATE \quad With probability $\mathcal{T}_c(s)$, choose $a \sim \pi^g_\phi(\cdot|s)$
          
          \STATE Execute action $a$, observe $s'$, $r$
          \STATE Store transition $(s, a, s', r)$ in $\mathcal{B}_g$
          
          \STATE Sample mini-batch $b_g \sim \mathcal{B}_g$
          \STATE Update $\mathcal{Z}^c_\zeta$ via TD Loss (\ref{Z_c_td_loss})
          \STATE Update $\pi^r_\psi$ via Return Maximization (\ref{pi_r_loss})
          \STATE Update temperature $\alpha$ via (\ref{soft_policy_improvement})
          \STATE Compute $\mathcal{T}_c(s)$ via Confidence Intervention (\ref{confidence intervention})
      \ENDWHILE
  \end{algorithmic}
  \end{algorithm}
\section{SIMULATION VERIFICATION}
\subsection{Experimental Setting}
\begin{figure}[h]
  \centering
  % \subfloat{\includegraphics[width=2.in]{pic/guid1.jpg}}%
  \subfloat{\includegraphics[width=3.4in]{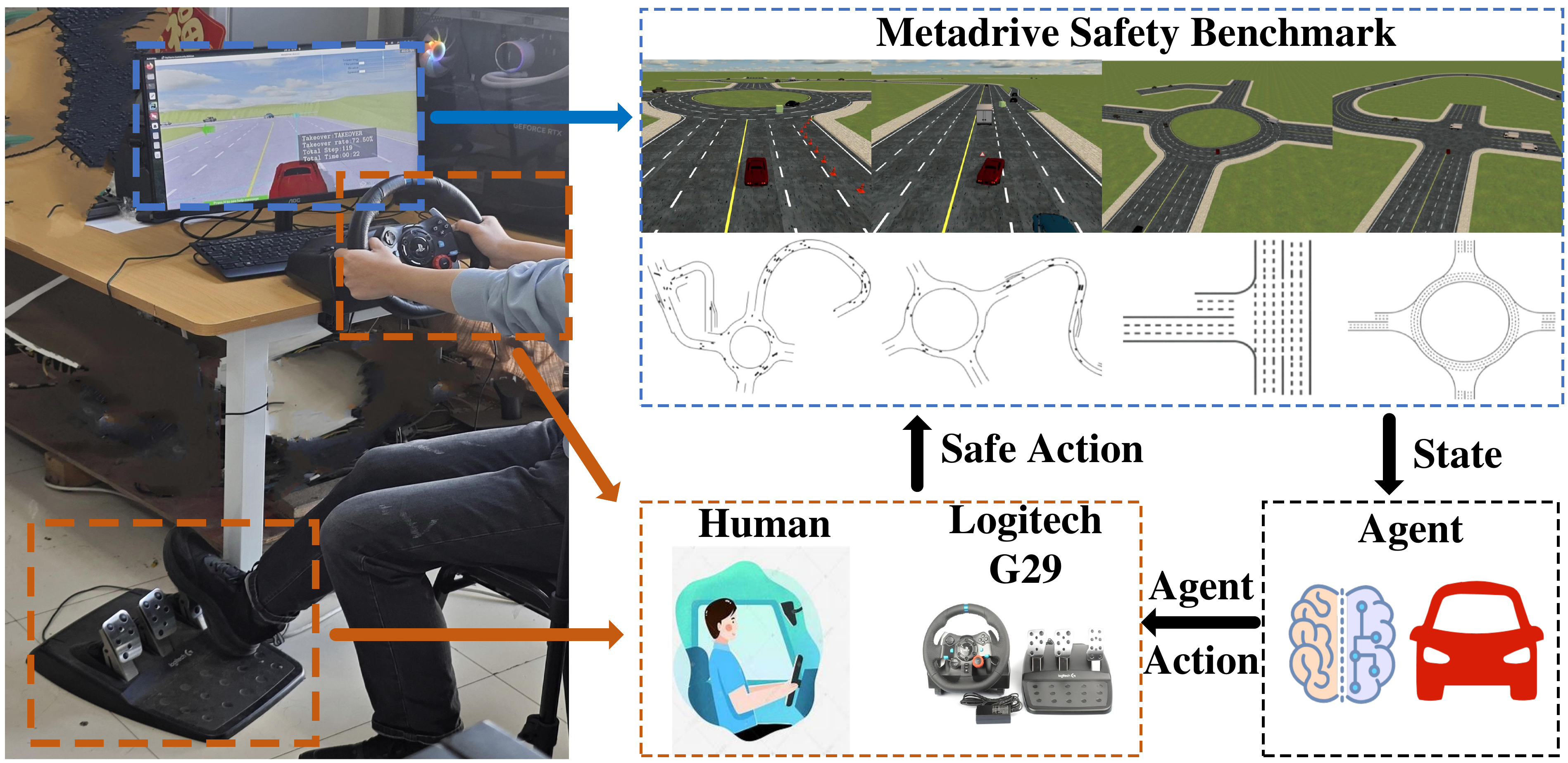}}%
  \caption{Simulation environment and human interfaces.}
  \label{env}
  \end{figure}
To evaluate the proposed C-HAC approach, we conduct experiments using the Metadrive safety benchmark\cite{metadrive}. We generate a diverse set of driving scenarios with each training session consisting of 20 different scenarios. As shown in Fig \ref{env}, the map of each scenario is composed of various typical block types, including straight, ramp, roundabout, T-intersection, and intersection. Each scenario also includes randomly placed obstacles, such as moving traffic vehicles, stationary traffic cones, and triangular warning signs. The definitions of the observation space, action space, environmental reward, and environmental cost are as follows: \\
\indent \textbf{Observation Space}: The observation space is defined as a continuous space with the following elements:
\begin{itemize}
    \item Current state: including the target vehicle's steering, heading, and velocity.
    \item Surrounding information: represented by a vector of 240 LIDAR-like distance measurements from nearby vehicles and obstacles.
    \item Navigation data: including the relative positions toward future checkpoints and the destination.
\end{itemize}
\indent \indent  \textbf{Action Space}: The action space is defined as a continuous space with the acceleration and the steering angle.\\
\indent \textbf{Reward}: The reward function consists of four parts:
\begin{equation}
  R = c_{\text{disp}} R_{\text{disp}} + c_{\text{speed}} R_{\text{speed}} + c_{\text{collision}} R_{\text{collision}} + R_{\text{term}}
\end{equation}
\begin{itemize}
    \item \textbf{$R_{\text{disp}}$}: Encourages forward movement, defined as $R_{\text{disp}} = d_t - d_{t-1}$, where $d_t$ and $d_{t-1}$ are the longitudinal movements at the current and previous time steps.
    \item \textbf{$R_{\text{speed}}$}: Encourages maintaining a reasonable speed, defined as $R_{\text{speed}} = v_t / v_{\max}$, where $v_t$ and $v_{\max}$ denote the current speed and maximum allowed speed.
    \item \textbf{$R_{\text{collision}}$}: Penalizes collisions, defined as $R_{\text{collision}} = -5$ if a collision with a vehicle, human, or object occurs, otherwise it is 0.
    \item \textbf{$R_{\text{term}}$}: This reward is assigned only at the last time step. If the vehicle reaches the destination, we choose $R_{\text{term}} = +10$ (success). If the vehicle drives out of the road, $R_{\text{term}} = -5$.
\end{itemize}

\indent \textbf{Cost}: Each collision with traffic vehicles, obstacles, or parked vehicles incurs a cost of -1. The environmental cost is utilized for testing the safety of the trained policies and measuring the occurrence of dangerous situations during the training process.\\
\indent As shown in Fig \ref{env}, human subjects can take over control using the Logitech G29 racing wheel and monitor the training process through the visualization of environments on the screen. The user interface displays several key indicators: '\textit{Speed}' flag shows the real-time speed of the target vehicles. '\textit{Takeover}' flag is triggered when a human takeover occurs. '\textit{Total step}' flag indicates the number of steps taking during training. '\textit{Total time}' flag represents the total time spent on training. '\textit{Takeover rate}' displays the frequency of human intervention. '\textit{Stage}' flag indicates the current training stage. '\textit{Reward policy}' flag is triggered when the self-learning policy is adopted instead of the human-guided policy.
\begin{figure*}[!t]
  \centering
  \subfloat[Test-time reward comparison.]{\includegraphics[width=2.73in]{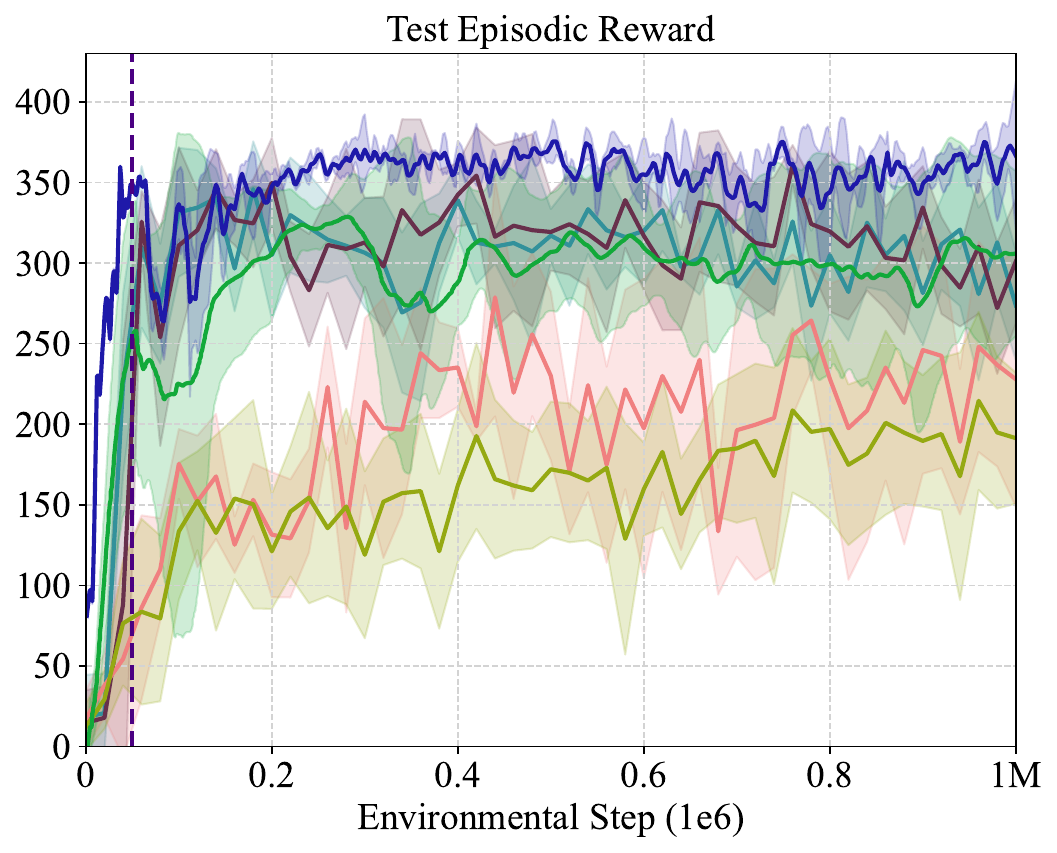}%
  \label{fig_6_1}}
  \hfil
  \subfloat[Train-time cost comparison.]{\includegraphics[width=2.87in]{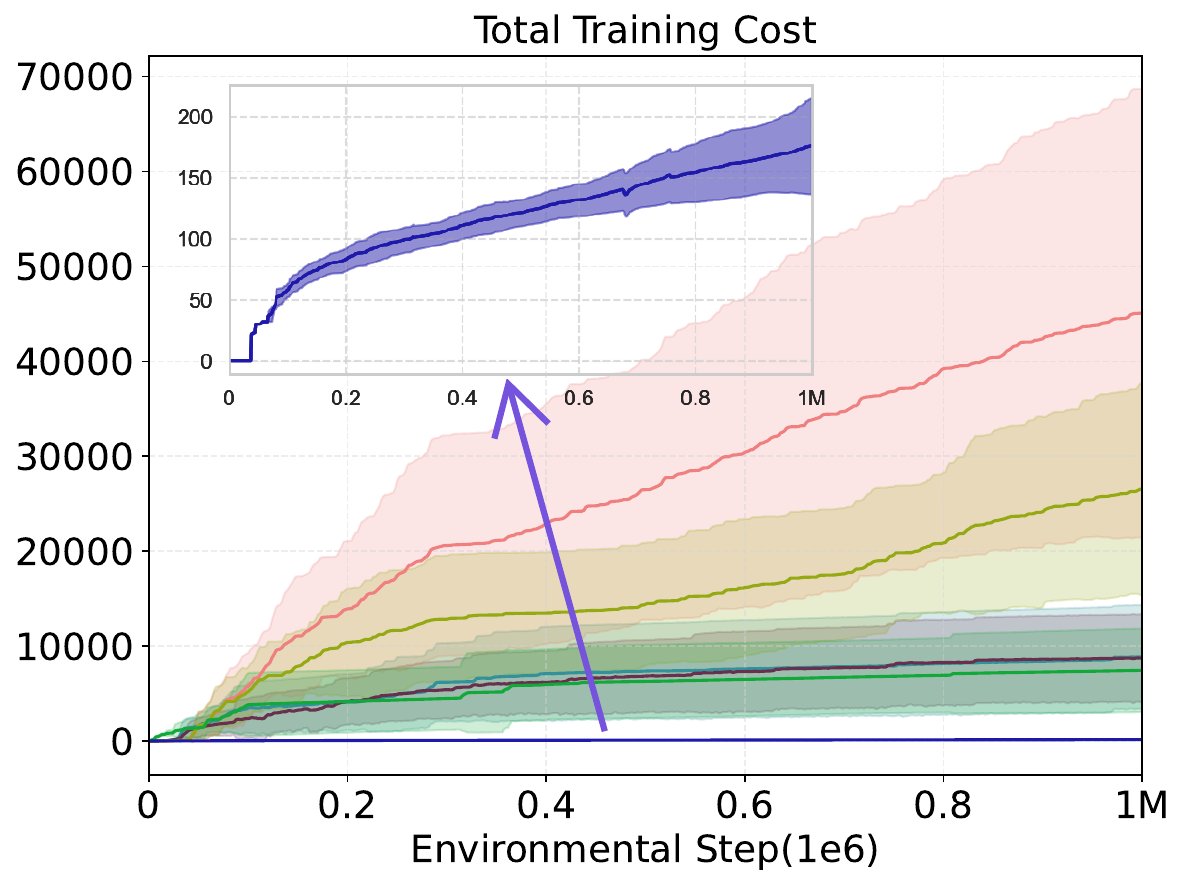}%
  \label{fig_4_2}}
  \hfil
  \begin{minipage}[b]{1.5in} % 第三列的宽度可调，比如 1.5in
    \centering
    \subfloat[Takeover rate comparison.]{\includegraphics[width=\textwidth]{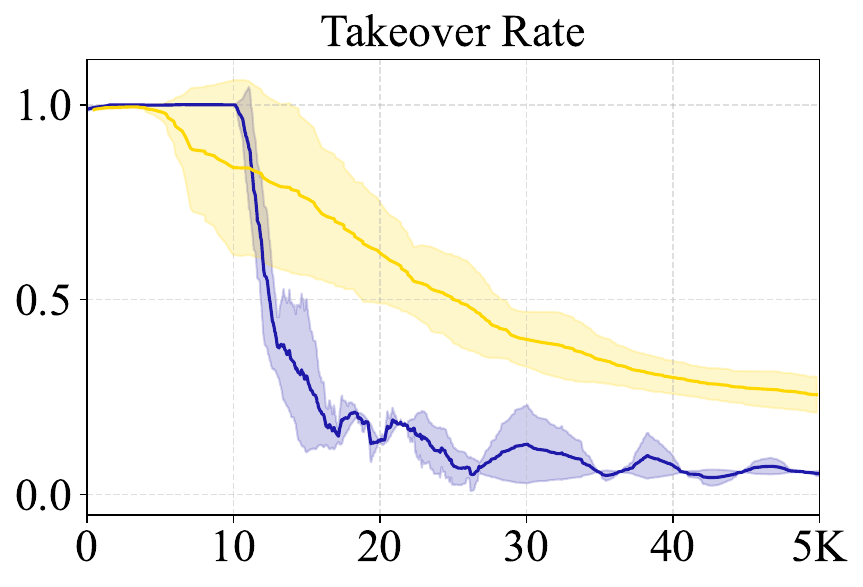}%
    \label{fig_4_3}}\\[0.1em]
    \subfloat[Test-time reward comparison.]{\includegraphics[width=\textwidth]{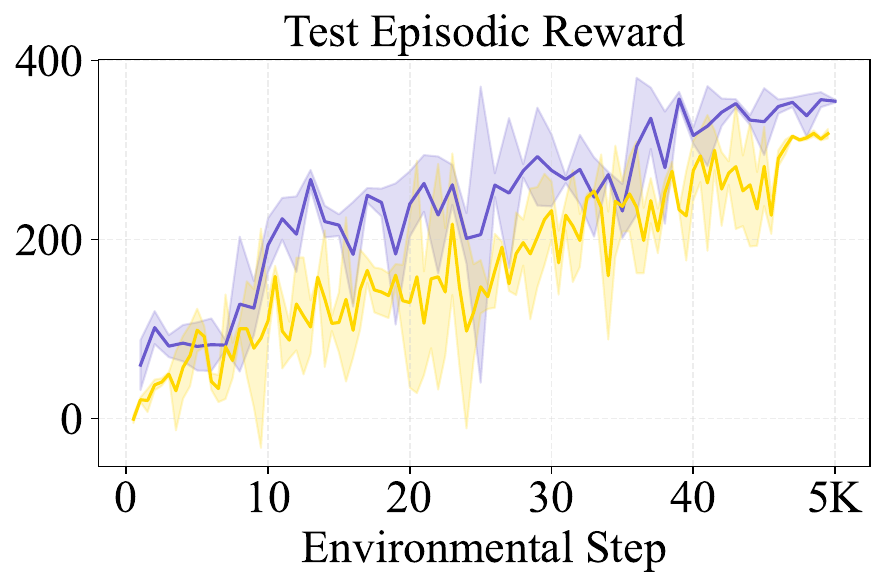}%
    \label{fig_4_4}} 
  \end{minipage}
  \\
  \subfloat{\includegraphics[width=6.43in]{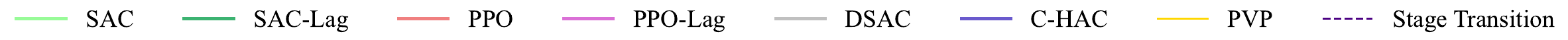}}
  % \caption{Performance of different baselines (a) Test-time reward and train-time cost of C-HAC with traditional and safe RL methods. (b) Test-time reward and takeover rate of C-HAC (human demonstration stage) and PVP.}
  \caption{Performance of different baselines.}
  \label{fig_6}
  \end{figure*}

\subsection{Baseline Methods}
The following baseline methods are used for comparison:
\begin{itemize}
  \item \textbf{RL}: Standard RL approaches including PPO, SAC, and DSAC; Safe RL approaches including PPO-Lag\cite{PPO_LAG} and SAC-Lag\cite{SAC_LAG}.
  \item \textbf{Offline RL and IL}: Offline RL approach using conservative Q-learning (CQL)\cite{CQL}; IL approaches using behavior cloning (BC) and generative adversarial imitation learning (GAIL) from human demonstrations.
  \item \textbf{HAC}: HAC approaches including PVP, D-PVP, HG-DAgger and IWR.
\end{itemize}
\indent \indent These baseline methods are implemented using RLLib. The training of baseline methods is conducted through five concurrent trails on Nvidia GeForce RTX 4080 GPUs. Each trail utilizes 2 CPUs with 6 parallel rollout workers. All baseline experiments about RL and IL are repeated five times using different random seeds. The experiments for C-HAC and PVP are conducted on a local computer and repeated three times.\\
\indent The evaluation metrics are divided into training and testing phases. During the training phase, we focus on data usage and total safety cost. The total safety cost represents the number of collisions during training, reflecting the potential dangers. In the testing phase, the primary metrics are episodic return, episodic safety cost, and success rate. The episodic safety cost is the average number of crashes in one episode. And the success rate is the ratio of episodes in which agents reach the destination to the total test episodes. For HAC methods, the total number of human data usage and the overall intervention rate are also reported. The overall intervention rate is the ratio of human data usage to total data usage, indicating the effort required from humans to teach the agents.
\subsection{Simulation Results}
\indent The performance of different baselines are listed in Table \ref{table1}. And Fig. \ref{fig_6} gives the learning curves.

 \begin{table*}[]
  \centering
  \caption{The performance of different baselines in the MetaDrive simulator.}
  \label{table1}
\begin{tabular}{lccccccc}
  \toprule
  \textbf{Method} & \multicolumn{3}{c}{\textbf{Training}} & \multicolumn{3}{c}{\textbf{Testing}} \\
  \cmidrule(lr){2-4} \cmidrule(lr){5-7}
  & \textbf{Human Data} & \textbf{Total Data} & \textbf{Safety Cost} & \textbf{Episodic Return} & \textbf{Episodic Safety Cost} & \textbf{Success Rate} \\
  \midrule
  SAC\textsuperscript{\cite{SAC}} & - & 1M & 8.86K $\pm$ 4.91K & 359.18 $\pm$ 18.51 & 1.00 $\pm$ 0.30 & 0.74 $\pm$ 0.17 \\
  PPO\textsuperscript{\cite{PPO}} & - & 1M & 45.12K $\pm$ 21.11K & 278.65 $\pm$ 35.07 & 3.92 $\pm$ 1.91 & 0.44 $\pm$ 0.14 \\
  SAC-Lag\textsuperscript{\cite{SAC_LAG}} & - & 1M & 8.73K $\pm$ 4.14K & 346.05 $\pm$ 20.57 & 0.64 $\pm$ 0.13 & 0.62 $\pm$ 0.17 \\
  PPO-Lag\textsuperscript{\cite{PPO_LAG}} & - & 1M & 26.55K $\pm$ 9.97K & 222.15 $\pm$ 49.66 & 0.88 $\pm$ 0.23 & 0.26 $\pm$ 0.07 \\
  DSAC\textsuperscript{\cite{DSAC}} & - & 1M & 7.44K $\pm$ 3.59K & 349.35 $\pm$ 22.15 & 0.47 $\pm$ 0.08 & 0.77 $\pm$ 0.09 \\
  \midrule
  Human Demo. & 50K & - & 23 & 377.523 & 0.39 & 0.97 \\
  \midrule
  CQL\textsuperscript{\cite{CQL}} & 50K (1.0) & - & - & 93.12 $\pm$ 16.31 & 1.45 $\pm$ 0.15 & 0.09 $\pm$ 0.05 \\
  BC\textsuperscript{\cite{BC}} & 50K (1.0) & - & - & 59.13 $\pm$ 8.92 & 0.12 $\pm$ 0.03 & 0 $\pm$ 0 \\
  GAIL\textsuperscript{\cite{gail}} & 50K (1.0) & - & - & 34.78 $\pm$ 3.92 & 1.07 $\pm$ 0.13 & 0 $\pm$ 0 \\
  \midrule
  HG-Dagger\textsuperscript{\cite{DAGGEROther02}} & 34.9K (0.70) & 0.05M & 56.13 & 142.35  & 2.1 & 0.30 \\
  IWR\textsuperscript{\cite{IWR}} & 37.1K (0.74) & 0.05M & 48.78 & 329.97  & 4.00 & 0.70 \\
  \midrule
  PVP\textsuperscript{\cite{PVP}} & 15K & 0.05M & 35.67 $\pm$ 4.32 & 338.28 $\pm$ 9.72 & 0.898 $\pm$ 0.15 & 0.81 $\pm$ 0.04 \\
  D-PVP & 15K  & 0.05M & 32.12$\pm$4.68 & 353.39$\pm$12.34 & 0.31$\pm$0.03 & 0.83$\pm$0.05 \\
  \textbf{C-HAC} & \textbf{15K}  & \textbf{0.05M+0.95M} & \textbf{176.00 $\pm$ 31.82}  & \textbf{392.92$\pm$15.70} &  \textbf{0.16$\pm$0.02}  & \textbf{0.91$\pm$0.02} \\
  \bottomrule
  \end{tabular}
\end{table*}
\textbf{Comparison with RL approaches.} Fig. \ref{fig_6}\subref{fig_6_1} highlights the training and testing performance of the proposed C-HAC compared to standard RL and safe RL algorithms. In the MetaDrive environment, C-HAC achieves an average return of 392.92 with a safety cost 0.16, significantly outperforming SAC, PPO, DSAC, SAC-Lag, and PPO-Lag. It also maintains higher success rates across all scenarios. During the learning-from-demonstration stage, C-HAC realizes a return of 353.39 and an 83\% success rate within 50,000 steps, completing training in approximately one hour. In the subsequent RL enhancement stage, the agent further improves its return, demonstrating the benefits of reward-guided policy refinement. Moreover, C-HAC ensures safety, recording an average of 35.67 safety violations in the demonstration stage and 140.33 in the enhancement stage, while maintaining rapid convergence.\\
\indent \textbf{Comparison with Offline RL and IL methods.} A dataset of 100 episodes (50K steps) of human driving data was collected for training offline RL and IL baselines. The dataset achieved a 97\% success rate, an average return of 377.52, and a safety cost of 0.39. Using this data, CQL, BC, and GAIL were trained. As presented in Table \ref{table1}, C-HAC significantly outperforms these methods, with test success rates for the baselines remaining below 10\%. Notably, GAIL achieves a 0\% success rate due to its reliance on strictly matching the expert data distribution, leading to poor generalization in unseen scenarios. Additionally, the episodic returns of BC, CQL, and GAIL are far below those achieved by C-HAC. Unlike IL methods, which optimize the agent to mimic expert actions at each timestep, C-HAC employs trajectory-based learning, promoting actions that maximize future rewards rather than simple imitation. Furthermore, C-HAC collects expert data online, effectively mitigating the distribution shift problem in offline RL and achieving superior performance.\\
\begin{figure}[]
  \centering
  % \subfloat{\includegraphics[width=2.in]{pic/guid1.jpg}}%
  \subfloat{\includegraphics[width=1.65in]{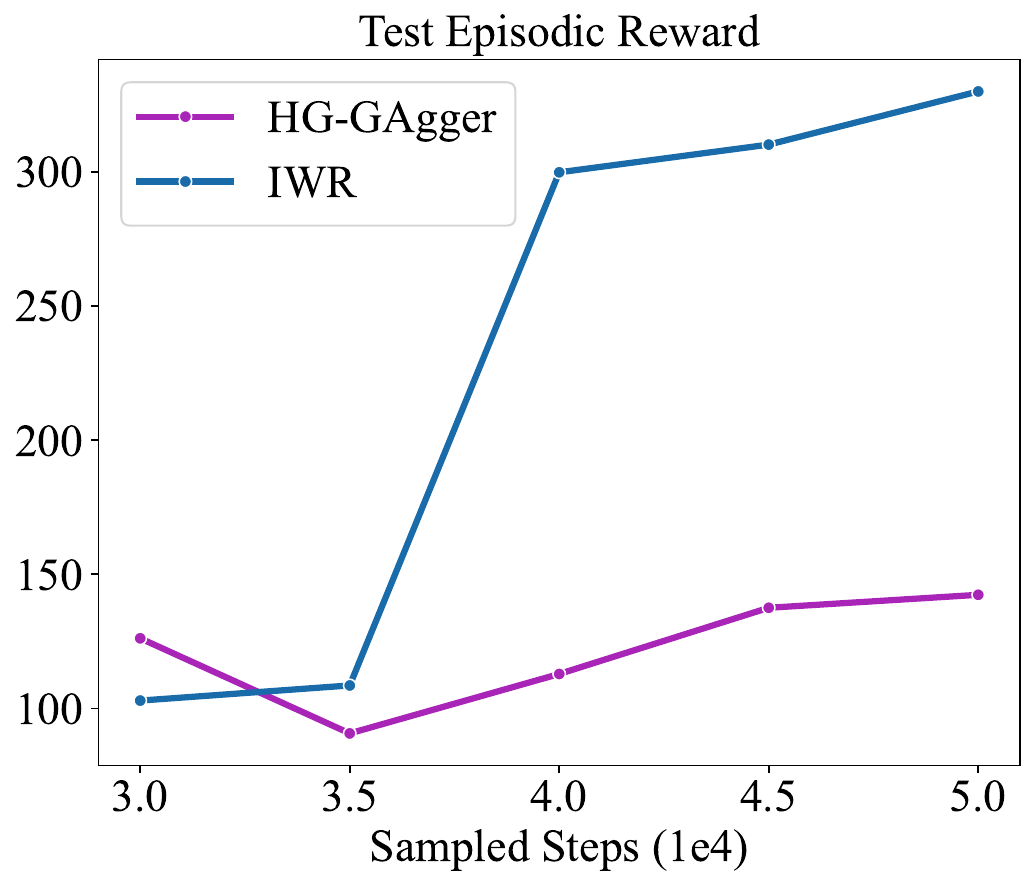}}%
  \hfil
  \subfloat{\includegraphics[width=1.65in]{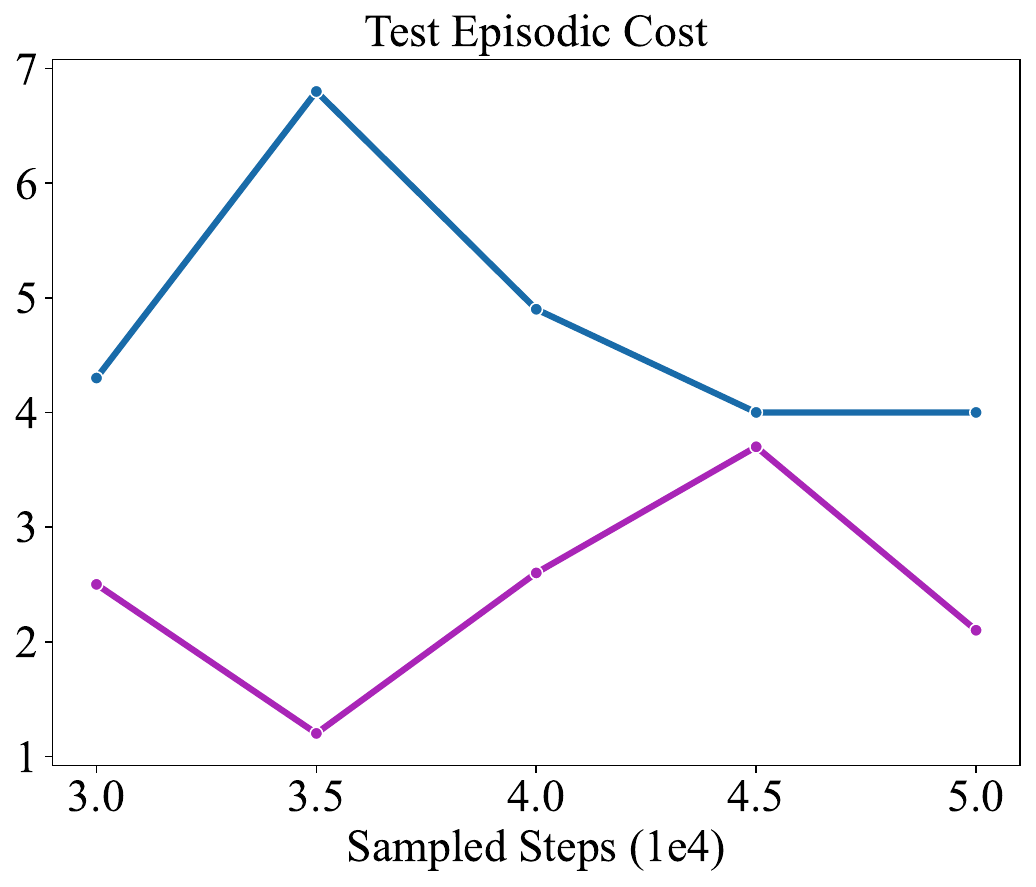}}%
  \caption{Performance comparison of C-HAC with HAC methods.}
  \label{fig_7}
  \end{figure}
\indent \textbf{Comparison with HAC Baseline.} During the warm-up phase, both HG-DAgger and IWR utilize a dataset of 30K pre-collected samples, which are later added to their replay buffers during training. Both methods also undergo four rounds of training using BC, with at least 5K human-guided samples in each round. As shown in Fig. \ref{fig_7}, only IWR achieves a satisfactory success rate due to its emphasis on prioritizing human intervention data. This approach enables the agent to learn critical maneuvers and avoid compounding errors. In contrast, HG-DAgger struggles with limited demonstrations, as it lacks a re-weighting mechanism for human-guided samples. The proposed C-HAC consistently performs better than these baselines, demonstrating superior performance during both the initial guidance phase and final evaluation. These results highlight the method's ability to effectively leverage limited human input for robust policy learning. The C-HAC was further evaluated against the PVP approach. As shown in Fig. \ref{fig_6} and Table \ref{table1}, C-HAC exhibits faster convergence during the human-guided phase compared to PVP. It achieves a more rapid reduction in the average intervention probability and obtains a higher final test return while utilizing a comparable amount of human guidance data. Furthermore, during the subsequent RL continuous enhancement stage, C-HAC demonstrates superior performance, achieving higher returns than PVP.\\
\begin{figure}[]
  \centering
  % \subfloat{\includegraphics[width=2.in]{pic/guid1.jpg}}%
  \subfloat{\includegraphics[width=1.65in]{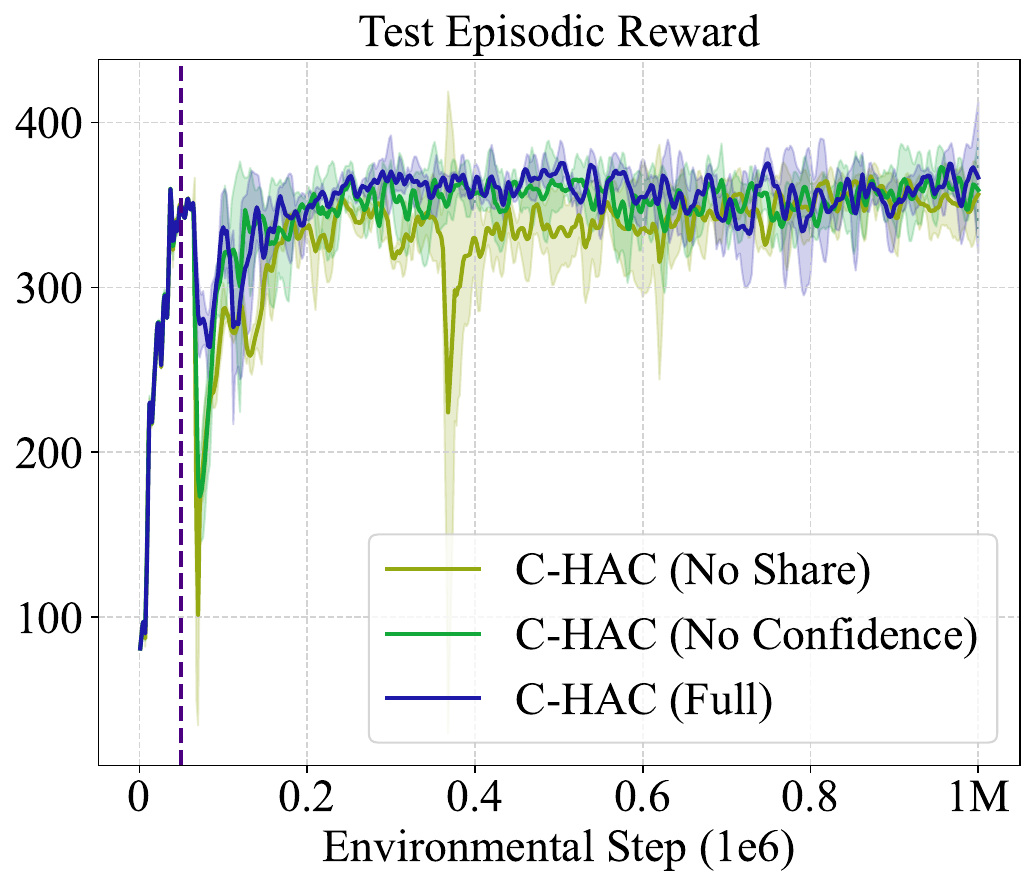}}%
  \hfil
  \subfloat{\includegraphics[width=1.65in]{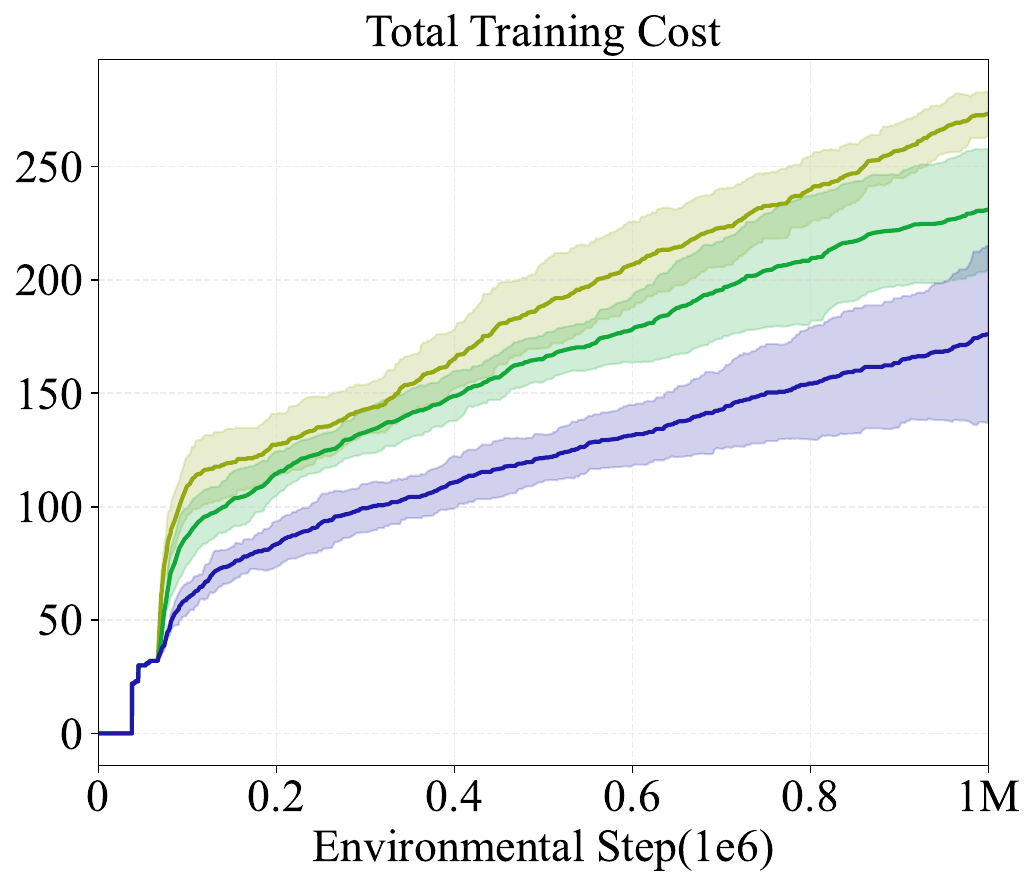}}%
  \caption{Impact of shared control mechanism and policy confidence evaluation on policy performance}
  \label{fig_8}
  \end{figure}
\indent To evaluate the effectiveness of the shared control mechanism and the policy confidence evaluation algorithm, we conducted the following comparative experiments. First, a baseline method (denoted as C-HAC (No Share)) excludes the shared control mechanism and directly trains the policy to maximize cumulative rewards after the learning-from-demonstration stage. Second, another baseline (denoted as C-HAC (No Confidence)) excludes the policy confidence evaluation algorithm, instead comparing the mean expected returns of policies without considering the confidence factor in formula (\ref{confidence intervention}); this method is equivalent to the case of setting $\delta=0.5$. Finally, our proposed approach (denoted as C-HAC (Full)) incorporates both the shared control mechanism and the policy confidence evaluation algorithm, with $\delta$ being 0.15. Fig. \ref{fig_8} depicts the Test Episodic Reward and Total Training Cost for these three methods. Method C-HAC (No Share) shows a 70\% decline in policy performance during the RL continuous enhancement stage, with significantly higher training costs. Method C-HAC (No Confidence), which employs the shared control mechanism, reduces the decline to 50\% and incurs a training cost of 200. Method C-HAC (Full), using the policy confidence evaluation algorithm, achieves the smallest decline of 20\% and the lowest training cost. These results demonstrate that the shared control mechanism and policy confidence evaluation algorithm significantly enhance the stability and safety of policy learning.
\subsection{Visualization}
In Fig. \ref{fig_9}, the action sequences of agents trained with C-HAC and PVP are visualized. The angle and length of each arrow represent the steering angle and acceleration, respectively, with the human subject’s actions highlighted in yellow. Compared to PVP, the action sequences generated by C-HAC are notably smoother and align more closely with those of human drivers.\\
\begin{figure}[]
  \centering
  % \subfloat{\includegraphics[width=2.in]{pic/guid1.jpg}}%
  \subfloat{\includegraphics[width=1.65in]{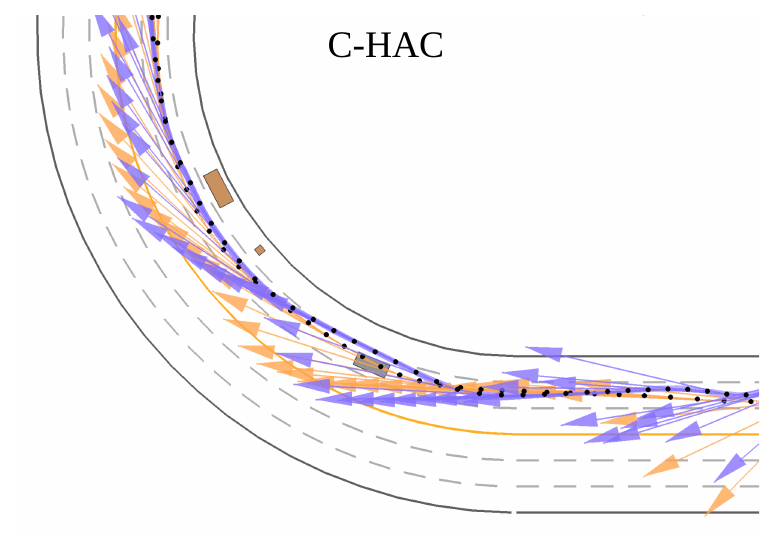}}%
  \hfil
  \subfloat{\includegraphics[width=1.65in]{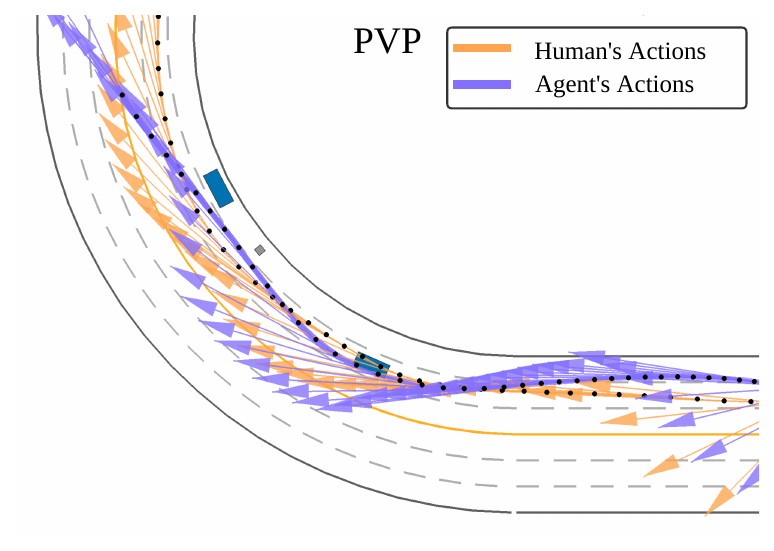}}%
  \caption{The action sequences generated by C-HAC and PVP agents in the same MetaDrive map.}
  \label{fig_9}
\end{figure}
\begin{figure}[]
  \centering
  \includegraphics[width=3.5in]{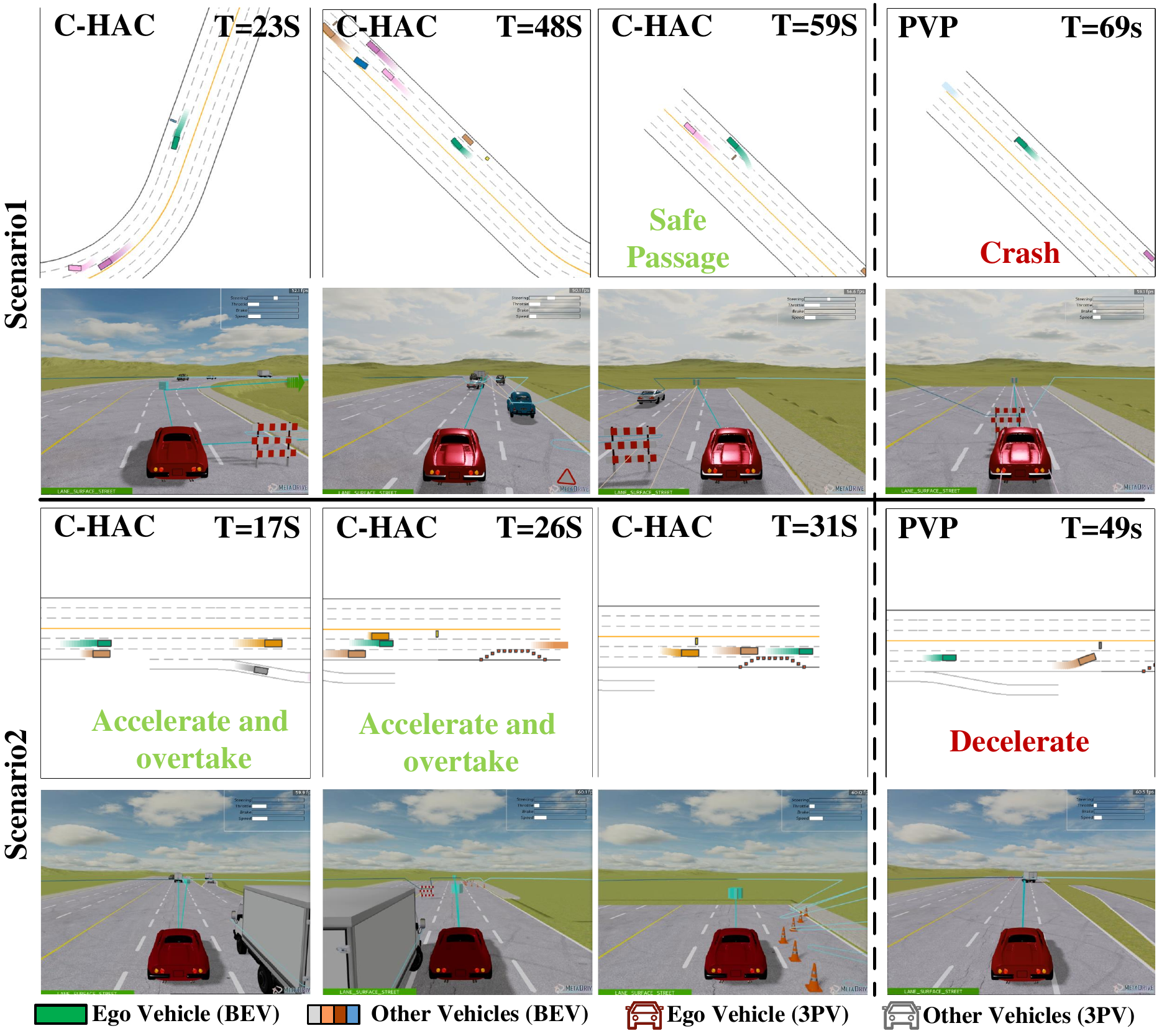}
  \caption{Comparative visualizations of D-PVP and C-HAC across two driving scenarios}
  \label{vis_2}
  \end{figure}
\indent In Fig. \ref{vis_2}, Scenario 1 compares the trajectories of an agent trained with D-PVP during the learn-from-demonstration stage to those refined by C-HAC in the RL continuous enhancement stage. When the D-PVP policy leads to collisions, C-HAC leverages its shared control mechanism to identify and correct these shortcomings. This capability explains C-HAC's higher success rate relative to D-PVP.\\
\indent In Fig. \ref{vis_2}, Scenario 2 depicts a congested environment. The D-PVP policy chooses to decelerate and wait for surrounding vehicles, whereas C-HAC accelerates at the right moment to overtake. Consequently, C-HAC completes the scenario more quickly and achieves higher episodic rewards.
% \begin{figure}[]
%   \centering
%   % \subfloat{\includegraphics[width=2.in]{pic/guid1.jpg}}%
%   \subfloat{\includegraphics[width=1.7in]{pic/compare_chac.png}}%
%   \hfil
%   \subfloat{\includegraphics[width=1.7in]{pic/compare_dpvp.png}}%
%   \caption{Enhanced reward performance of C-HAC in traffic-Heavy environments.}
%   \label{fig_11}
% \end{figure}
\begin{figure}[h]
  \centering

  \subfloat{\includegraphics[width=2.5in]{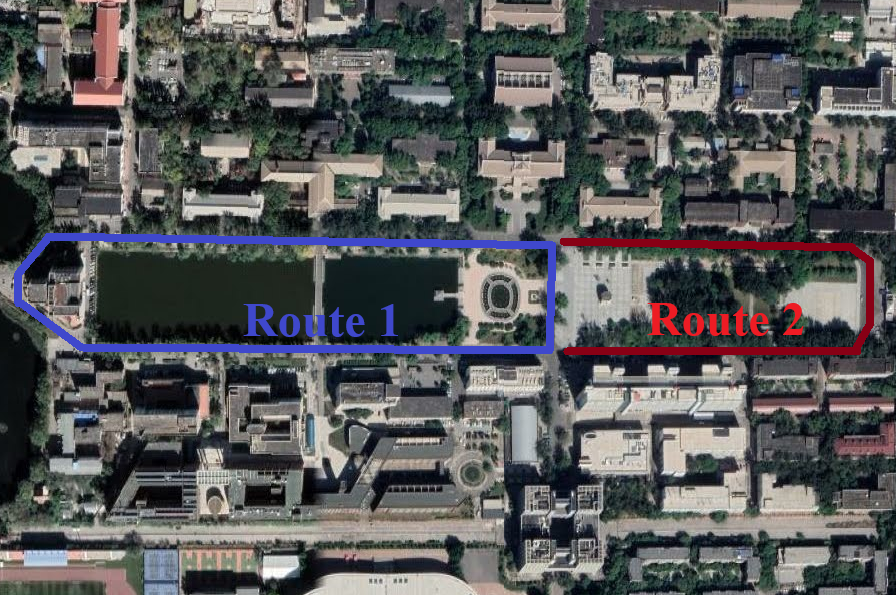}}%
  \caption{Routes for training and testing in real-world experiments}
  \label{real_v}
  \end{figure}
  \begin{figure}[h]
    \centering
    % \subfloat{\includegraphics[width=2.in]{pic/guid1.jpg}}%
    \subfloat{\includegraphics[width=3.4in]{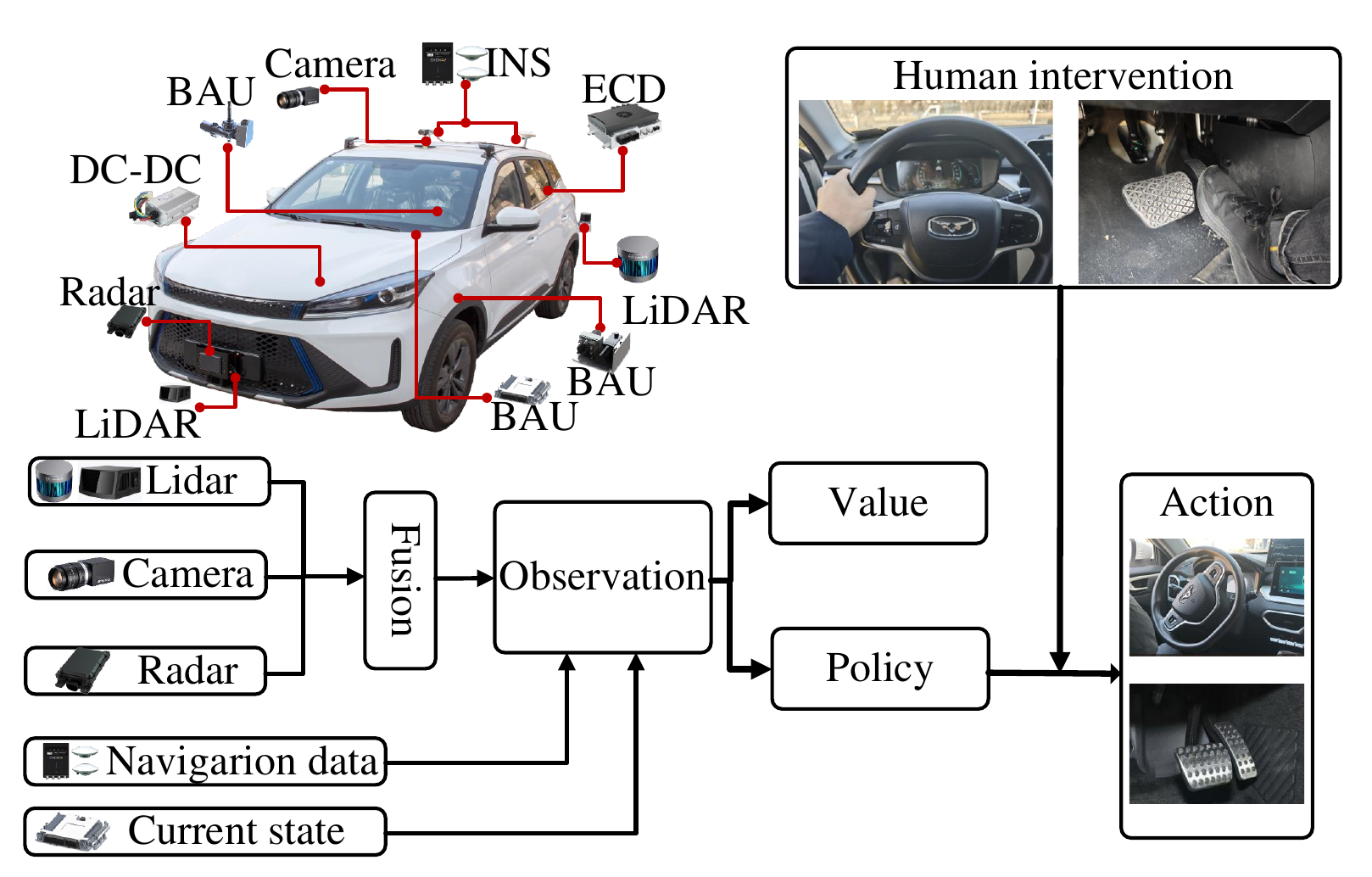}}%
    \caption{Architecture of UGV setup with human intervention.}
    \label{real_frame}
    \end{figure}
\subsection{Real-World Driving Experiment}
\begin{figure*}[]
  \centering
  % \subfloat{\includegraphics[width=2.in]{pic/guid1.jpg}}%
  \subfloat[UGV executes left turn while avoiding pedestrian.]{\includegraphics[width=2.2in]{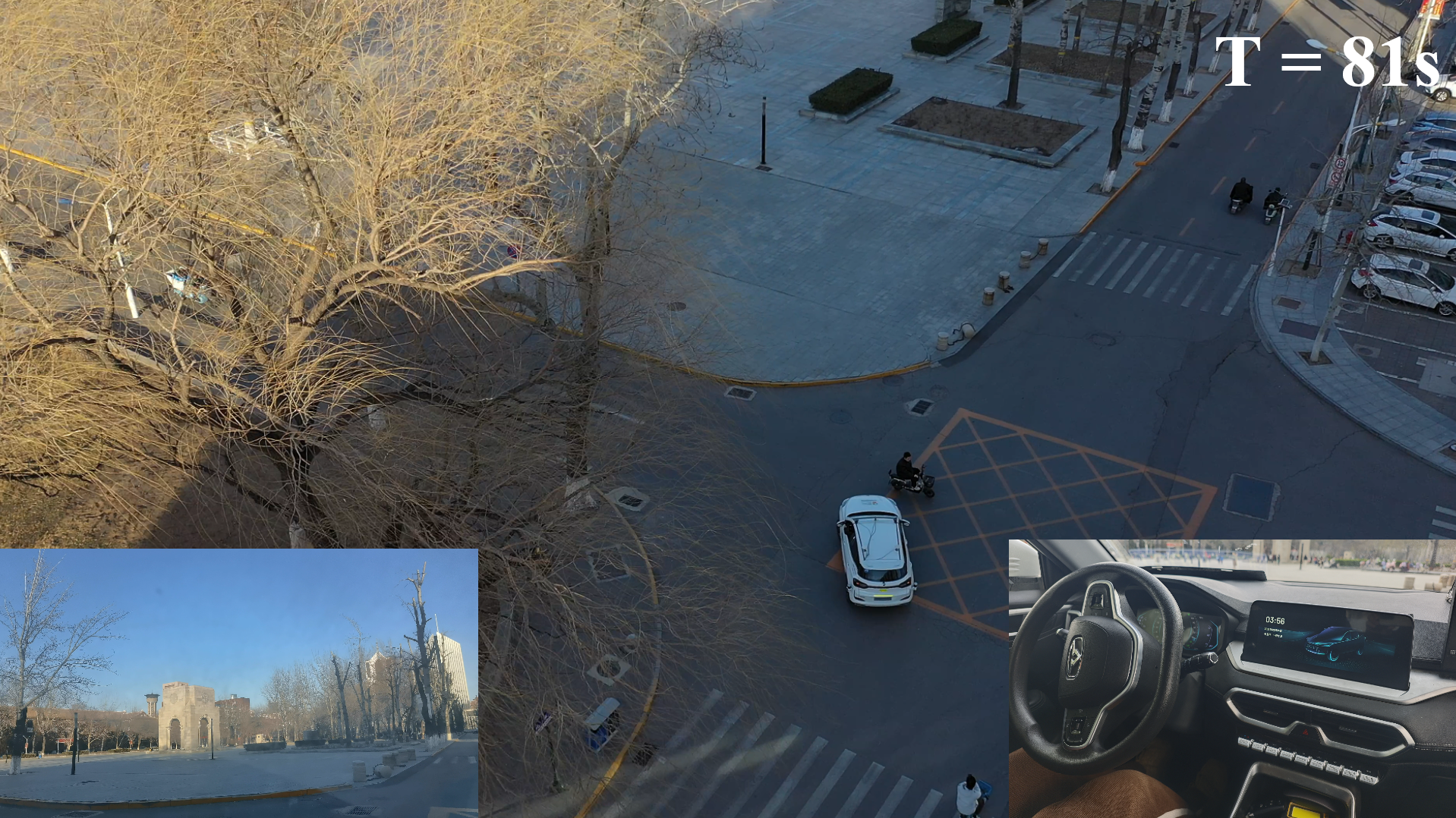}}%
  \hfil
  \subfloat[UGV slows down and stops to yield to a crossing pedestrian.]{\includegraphics[width=2.2in]{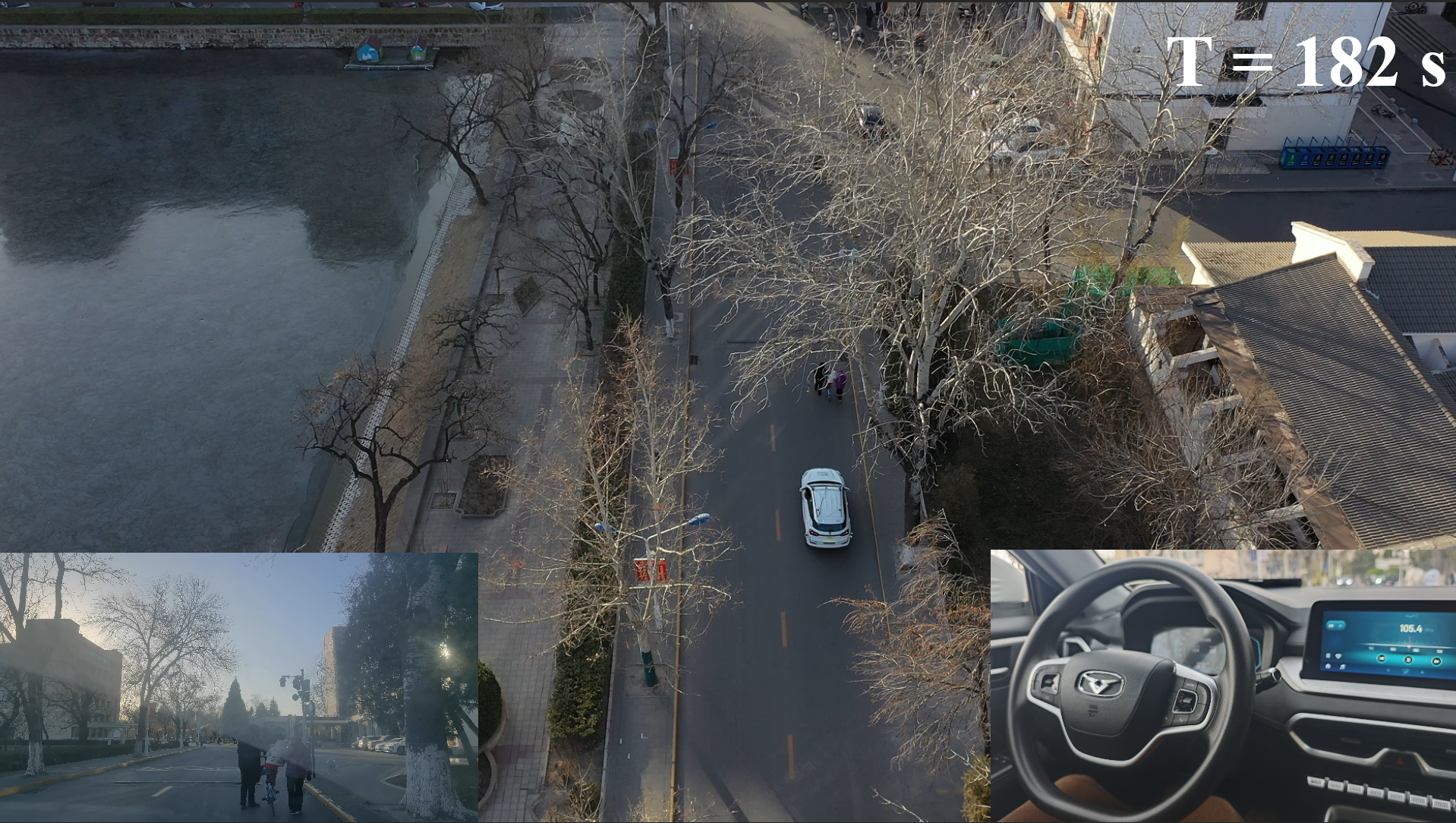}}%
  \hfil
  \subfloat[UGV executes a sharp turn.]{\includegraphics[width=2.2in]{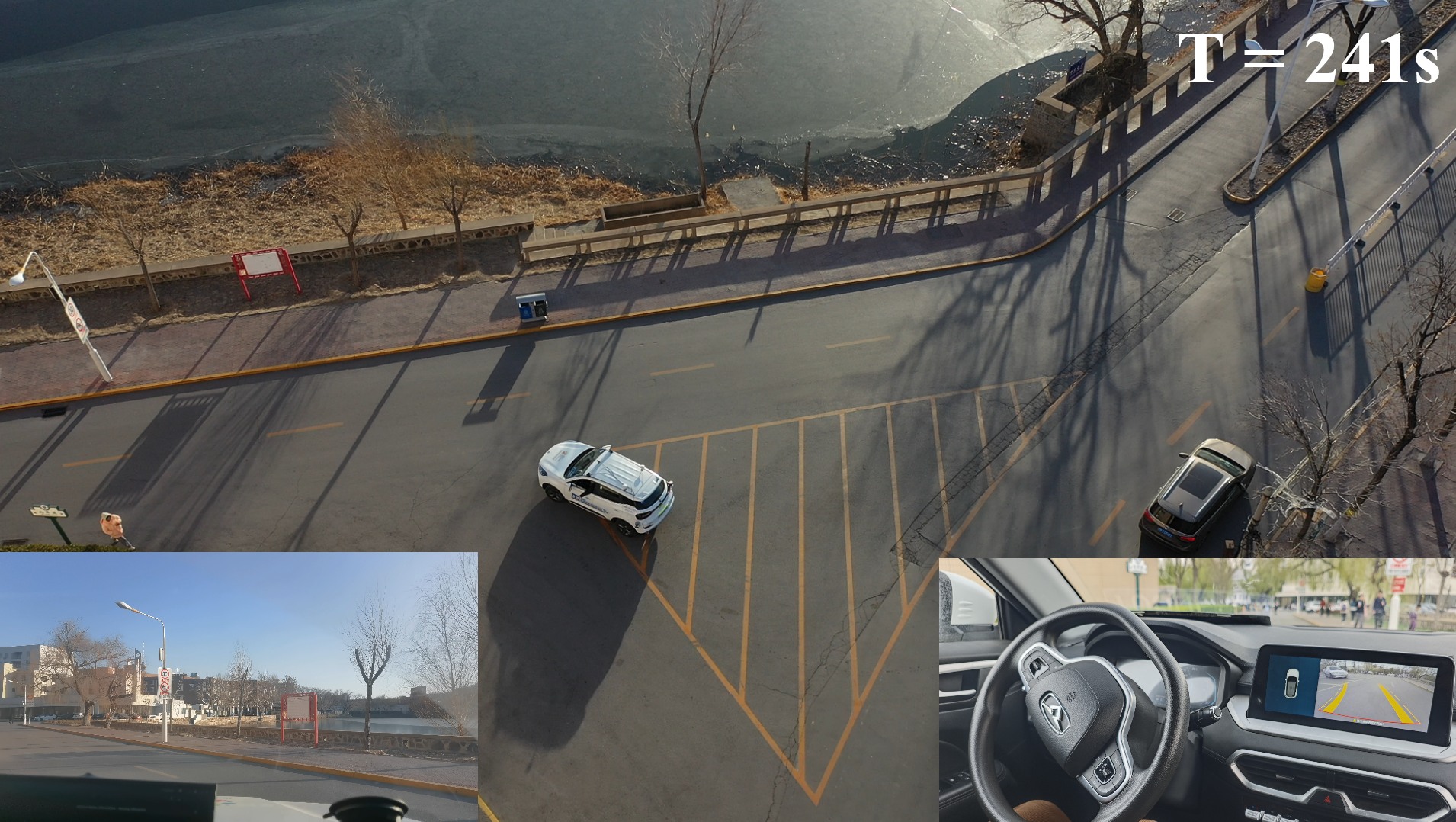}}%
  \\
  \subfloat[UGV maneuvers around an obstacle.]{\includegraphics[width=2.2in]{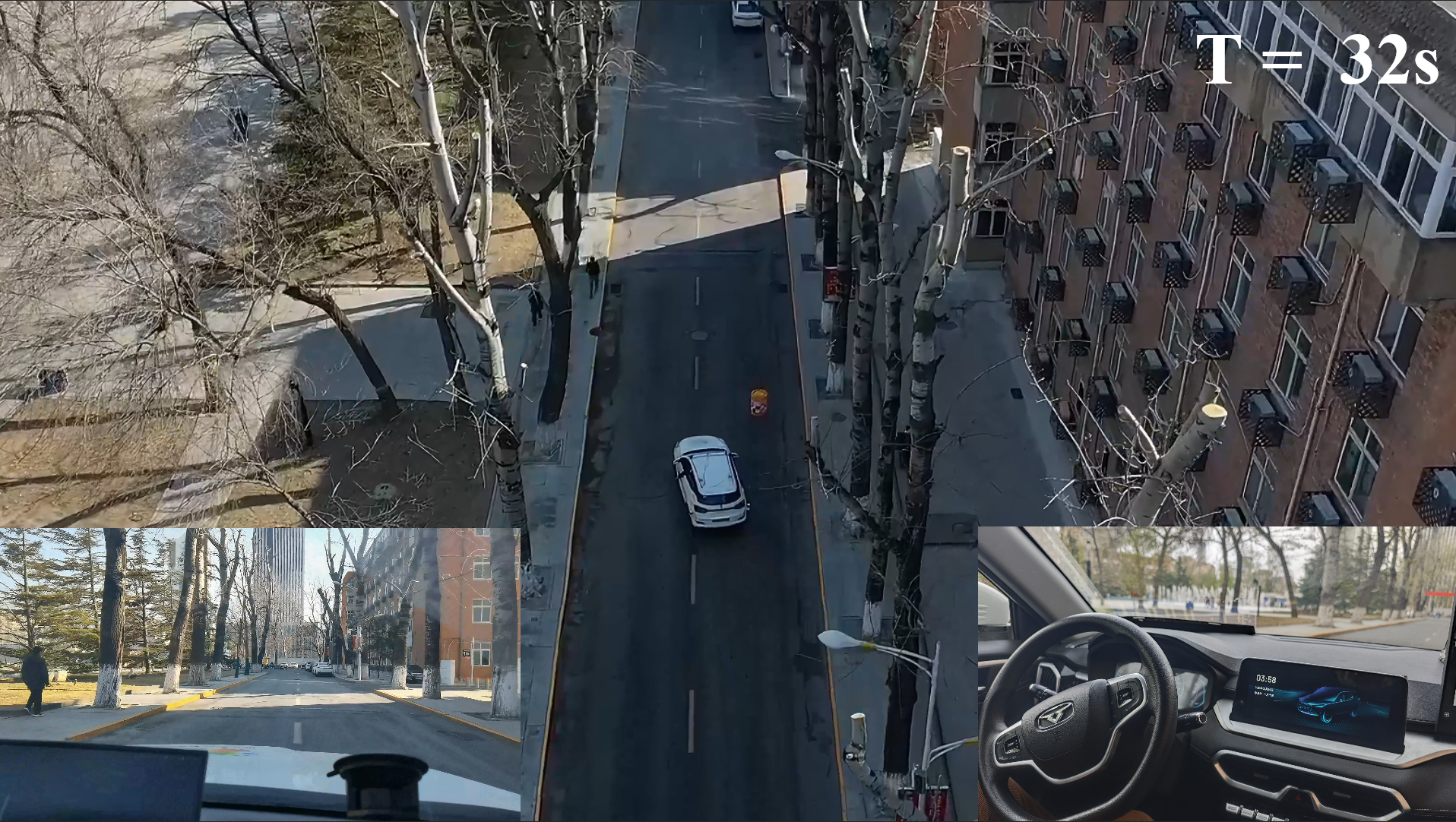}}%
  \hfil
  \subfloat[UGV navigates past a stationary vehicle.]{\includegraphics[width=2.2in]{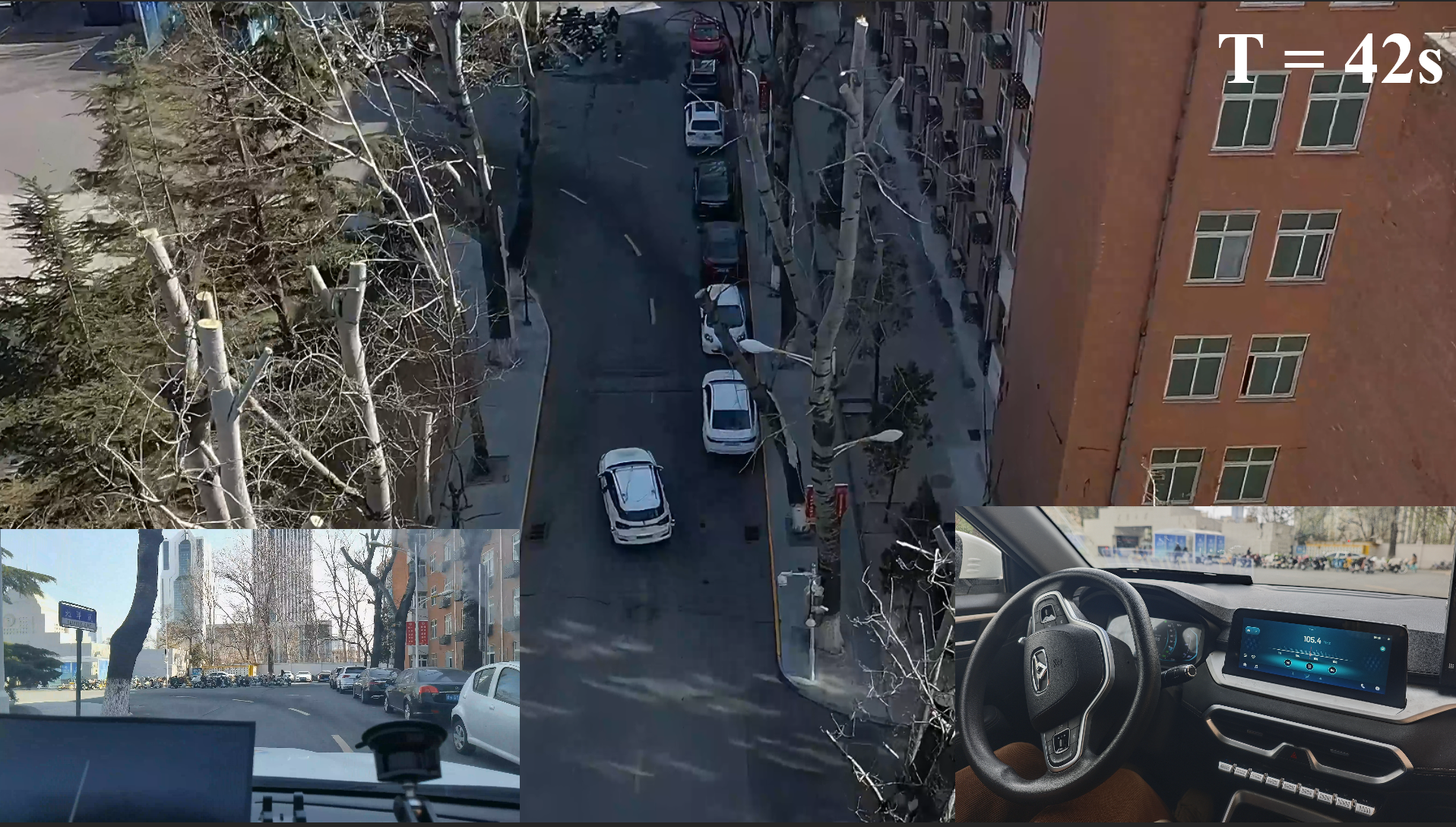}}%
  \hfil
  \subfloat[UGV manages an intersection with heavy traffic.]{\includegraphics[width=2.2in]{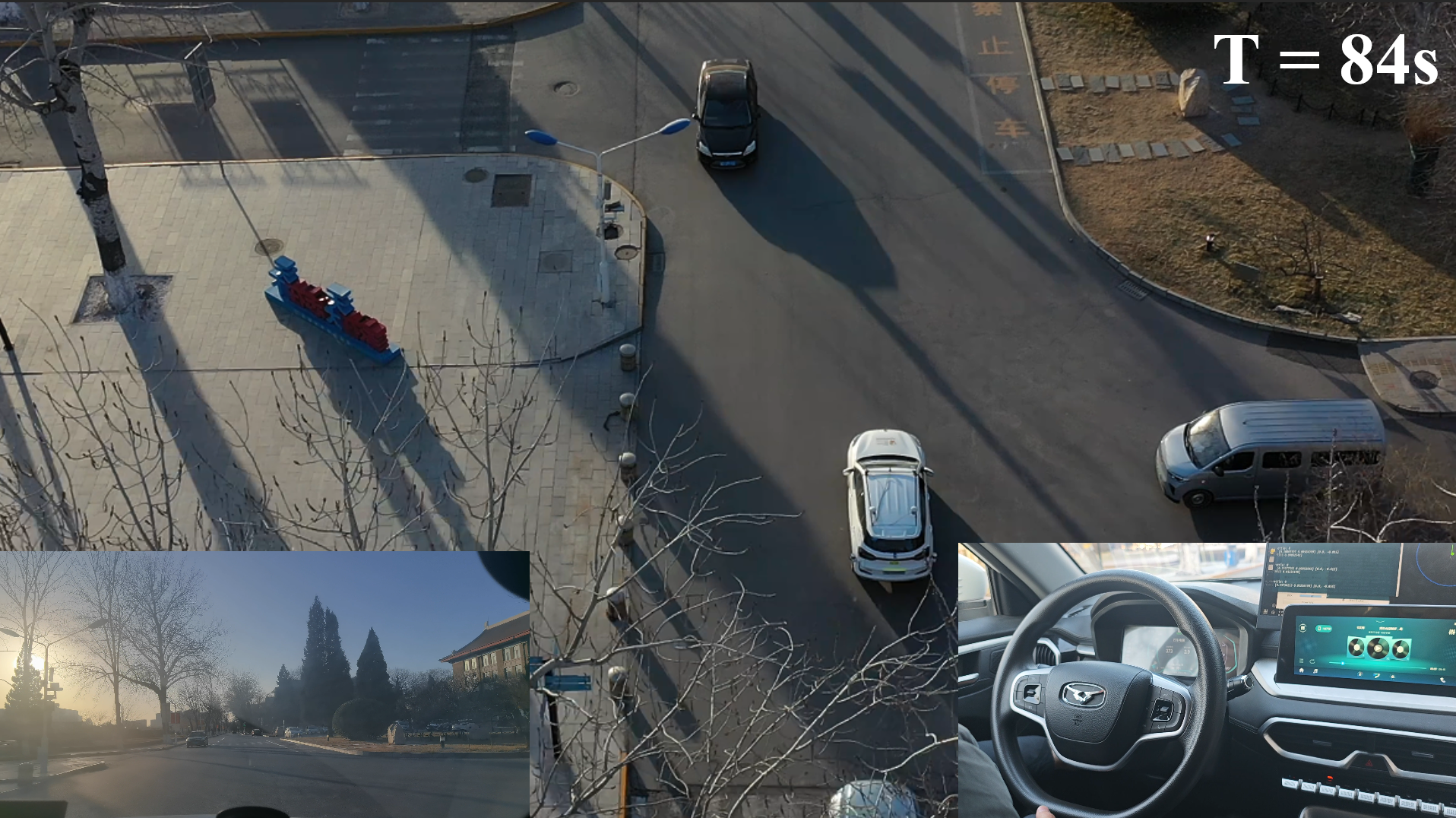}}%
  \caption{Real-world driving performance on Route 1 and Route 2.}
  \label{route}
  \end{figure*}

  \begin{figure}[]
    \centering
    % \subfloat{\includegraphics[width=2.in]{pic/guid1.jpg}}%
    \subfloat[Action outputs on Route 1.]{\includegraphics[width=1.7in]{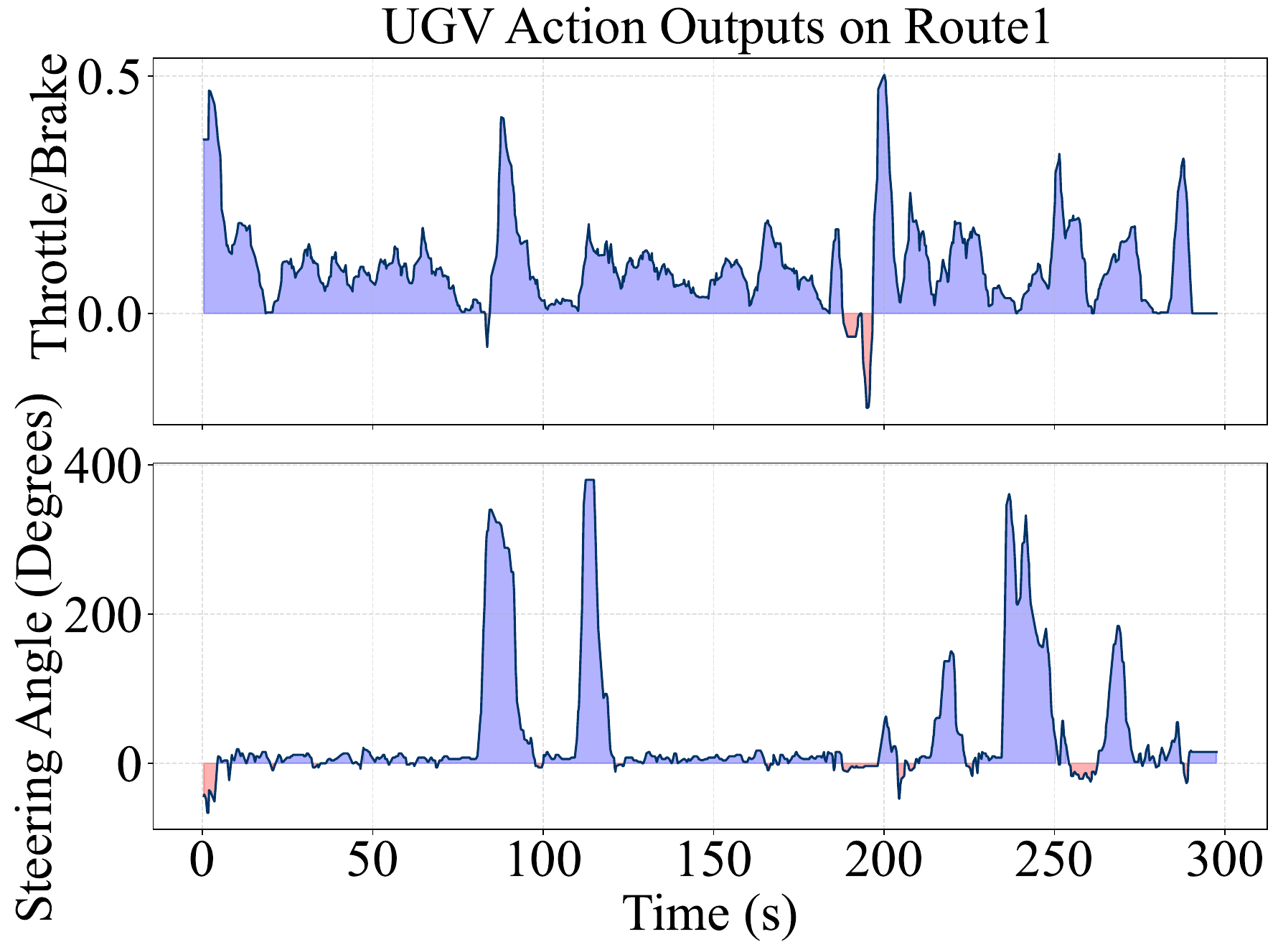}}%
    \hfil
    \subfloat[Action outputs on Route 2.]{\includegraphics[width=1.7in]{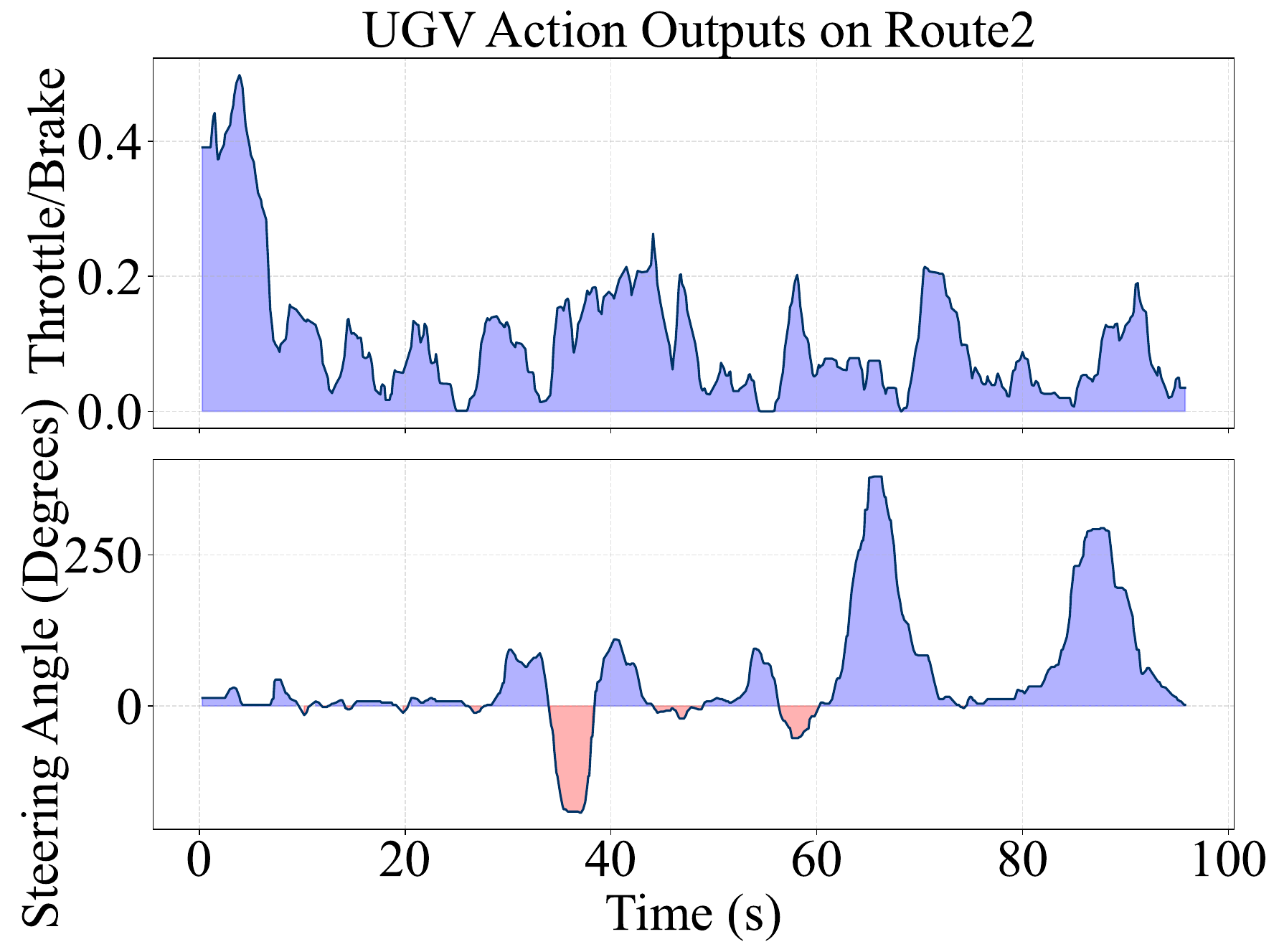}}%
    \\
    \subfloat[Speed on Route 1.]{\includegraphics[width=1.7in]{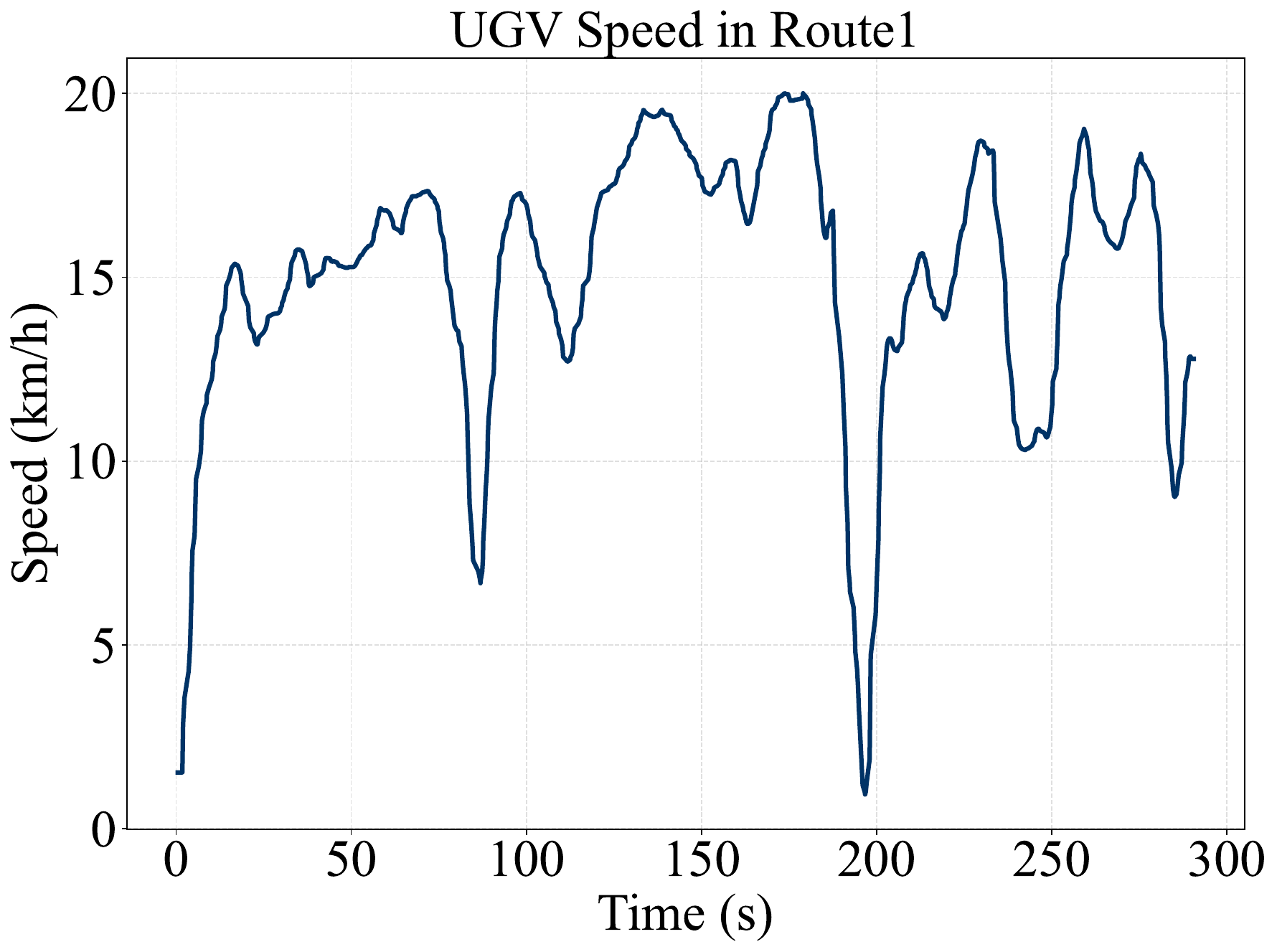}}%
    \hfil
    \subfloat[Speed on Route 2.]{\includegraphics[width=1.7in]{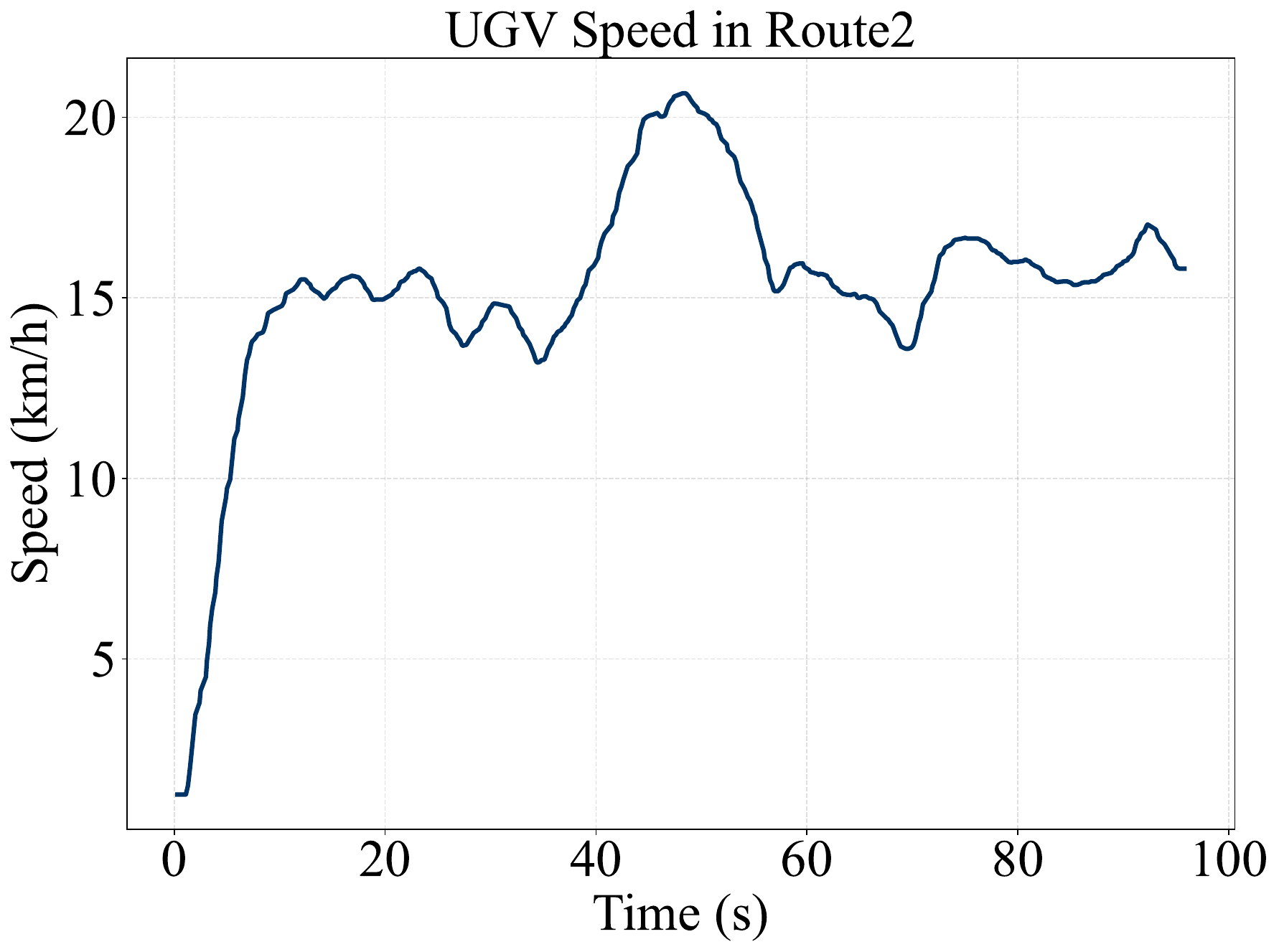}}%

    \caption{Action outputs and speeds of the vehicle in real-world scenarios}
    \label{action}
    \end{figure}
As shown in Fig. \ref{real_v}, the real-world training process takes place on the campus roads of Tianjin University. Each route consists of multiple checkpoints that specify both position and driving commands. Route 1 is used for training, while Route 2 is designated for generalization testing. As illustrated in Fig. \ref{real_frame}, the UGV localizes via an integrated navigation system (INS) and perceives its surroundings using LiDAR, camera, and radar. An Nvidia Jetson AGX Orin is responsible for the perception tasks. Training is performed on a separate GPU, and the resulting control commands are sent to the base adapter unit (BAU). Random pedestrians, bicycles, and vehicles introduce natural variability, making the training conditions more challenging. The definitions of the observation space and action space are given below:\\
  \indent \textbf{Observation space}: This is a continuous space comprising (a) the UGV's current state, including speed, lateral offset from the lane center, and heading angle relative to the lane center; (b) surrounding information, which is derived by fusing detection results from LiDAR, camera, and radar, then converting these into a 240-dimensional LiDAR-like distance vector for nearby vehicles and obstacles—following a representation approach similar to MetaDrive; and (c) navigation data, consisting of the next 30 checkpoints and driving instructions such as go straight, turn left or turn right.\\
  \indent \textbf{Action space}: The action space is continuous, with two components: acceleration and steering angle.\\
  \indent During training, the UGV navigates checkpoints on Route 1 while actively avoiding obstacles and other vehicles. A human operator may intervene at any time by pressing the autopilot mode switch button or using the steering wheel and throttle/brake pedals to manually override the system. The UGV continues driving under supervision until reaching a predefined total of 100,000 steps, at a policy execution frequency of 10 Hz, resulting in roughly two hours of training.\\
  \indent In testing on Route 1, the UGV successfully completes the entire route. As shown in Fig. \ref{route}(a), it executes a left turn while avoiding a pedestrian, and in Fig. \ref{route}(b), it slows down and stops to yield to a crossing pedestrian. Additionally, as depicted in Fig. \ref{route}(c), the UGV executes a sharp turn, with the corresponding actions and speed presented in Fig. \ref{action}(a) and Fig. \ref{action}(c). For generalization testing, the UGV is evaluated on Route 2, with action and speed output shown in Fig. \ref{action}(b) and Fig. \ref{action}(d). In Fig. \ref{route}(f), it maneuvers around an obstacle, while Fig. \ref{route}(g) shows the UGV navigating past a stationary vehicle. Finally, as illustrated in Fig. \ref{action}(b), it effectively manages an intersection with heavy traffic.

\section{Conclusion}
This paper proposes a confidence-guided human-AI collaboration method that enhances autonomous driving policy learning through the integration of distributional proxy value propagation and distributional soft actor-critic. Such a treatment overcomes the limitations of purely human-guided or purely self-learning strategies by providing a structured two-stage procedure comprising learn from demonstration and RL continuous enhancement. Experimental results on the MetaDrive benchmark demonstrate substantial improvements in both safety and overall performance compared to conventional RL, safe RL, IL, and other HAC methods. By incorporating a shared control mechanism and a policy confidence evaluation algorithm, this method efficiently reduces human supervision while preserving essential human-guided behaviors. These findings underscore the potential of unifying human expertise and autonomous exploration within a single framework, thereby offering a more scalable and robust pathway toward real-world autonomous driving systems.

\bibliographystyle{IEEEtran}
\bibliography{references}

% \newpage

\begin{IEEEbiography}[{\includegraphics[width=1in,height=1.25in,clip,keepaspectratio]{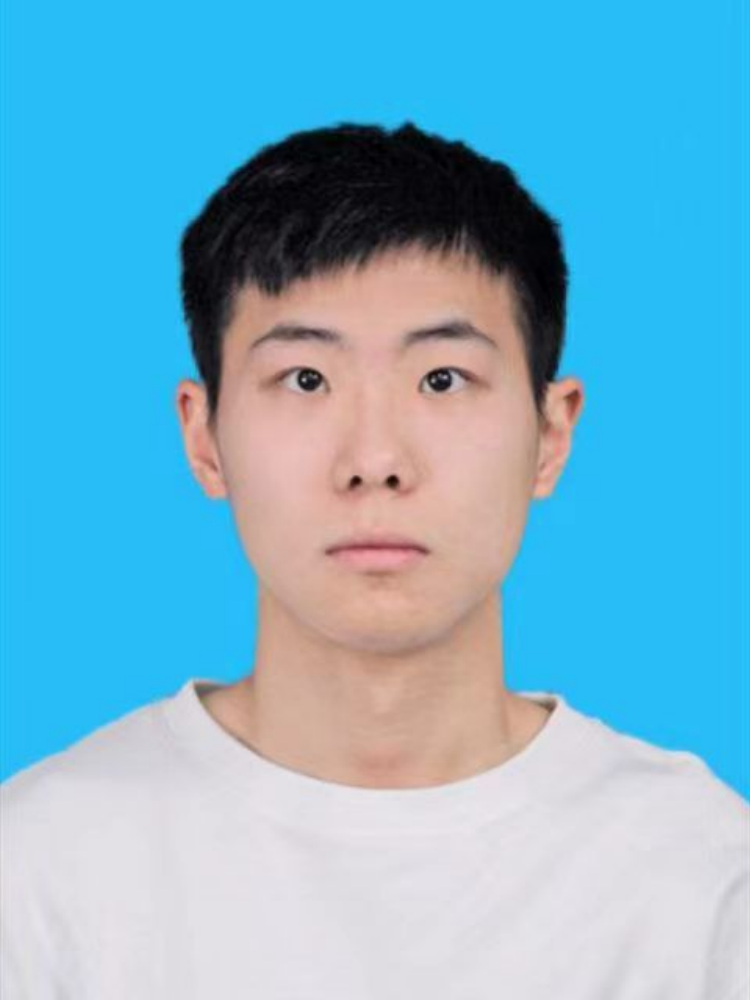}}]{Zeqiao Li}
  received the B.S. degree in intelligent science and technology from Hebei University of Technology, Tianjin, China, in 2023. He is currently pursuing the PhD degree with the School of Electrical and Information Engineering, Tianjin University, Tianjin, China.

  His research interests include reinforcement learning, optimal control, and self-driving decision-making.
  \end{IEEEbiography}

  \vspace{-30pt}
\begin{IEEEbiography}[{\includegraphics[width=1in,height=1.25in,clip,keepaspectratio]{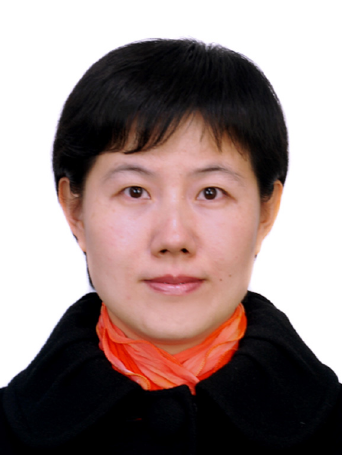}}]{Yijing Wang}  received the M.S. degree in control theory and control engineering from Yanshan University, Qinhuangdao, China, in 2000, and the Ph.D. degree in control theory from Peking University, Beijing, China, in 2004. 
  
  In 2004, she joined the School of Electrical and Information Engineering, Tianjin University, Tianjin, China, where she is a Full Professor. Her research interests are analysis and control of switched/hybrid systems and robust control.
  \end{IEEEbiography}
  \vspace{-30pt}
% \vspace{11pt}
  
  % \begin{IEEEbiography}[{\includegraphics[width=1in,height=1.25in,clip,keepaspectratio]{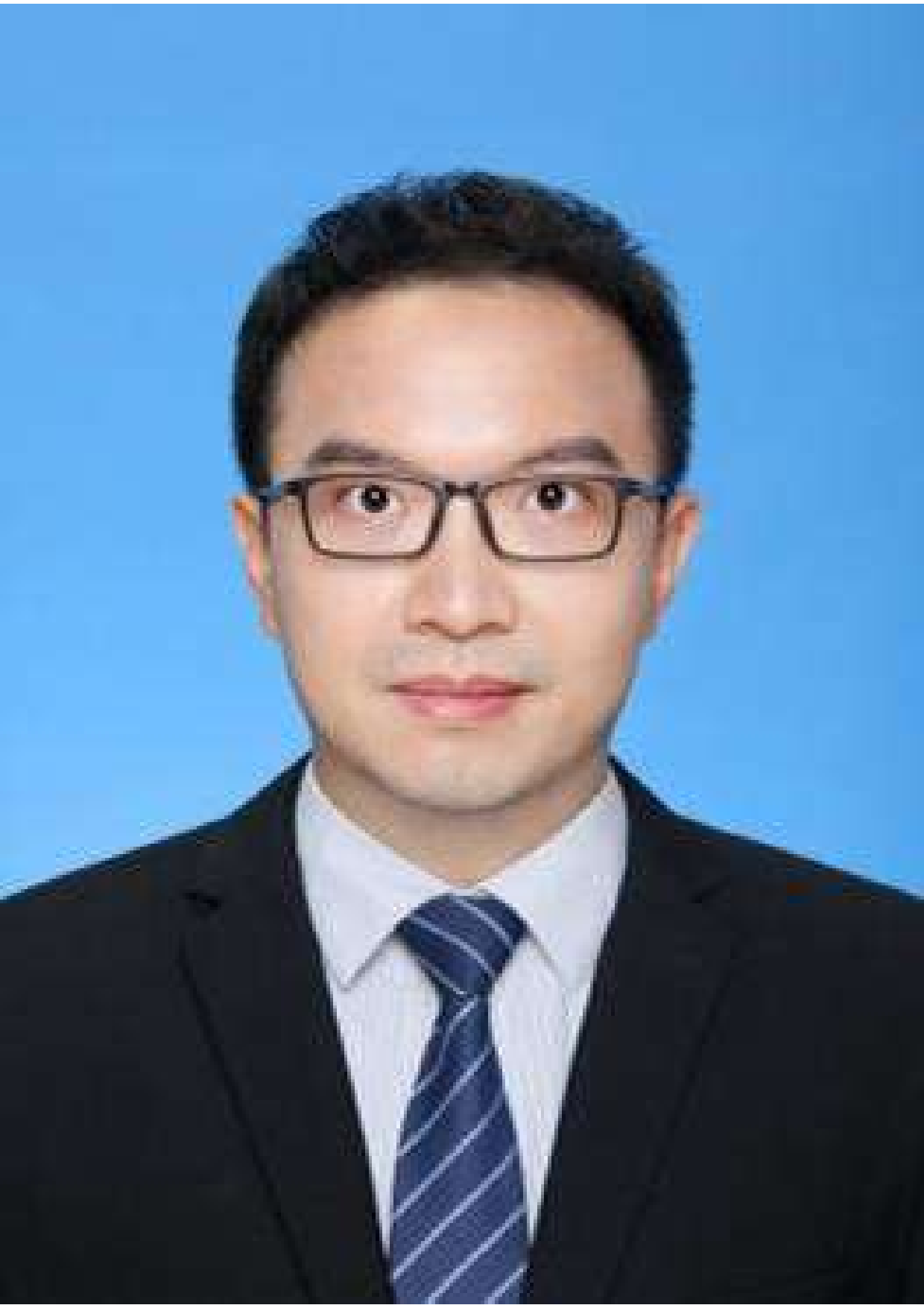}}]{Chuan Hu}
  %   received the B.S. degree in automotive engineering from Tsinghua University, Beijing, China, in 2010, the M.S. degree in vehicle operation engineering from the China Academy of Railway Sciences, Beijing, in 2013, and the Ph.D. degree in
  %   mechanical engineering from McMaster University, Hamilton, ON, Canada, in 2017. He has been a tenure-track Associate Professor with the School of Mechanical Engineering, Shanghai Jiao Tong University, Shanghai, China, Since July 2022. His
  %   research interests include the perception, decisionmaking,path planning, and motion control of intelligent and connected vehicles (ICVs), autonomous driving, eco-driving, human–machine trust an cooperation, shared control, and machine-learning applications in ICVs.
  %   \end{IEEEbiography}

  %   \vspace{-20pt}
  \begin{IEEEbiography}[{\includegraphics[width=1in,height=1.25in,clip,keepaspectratio]{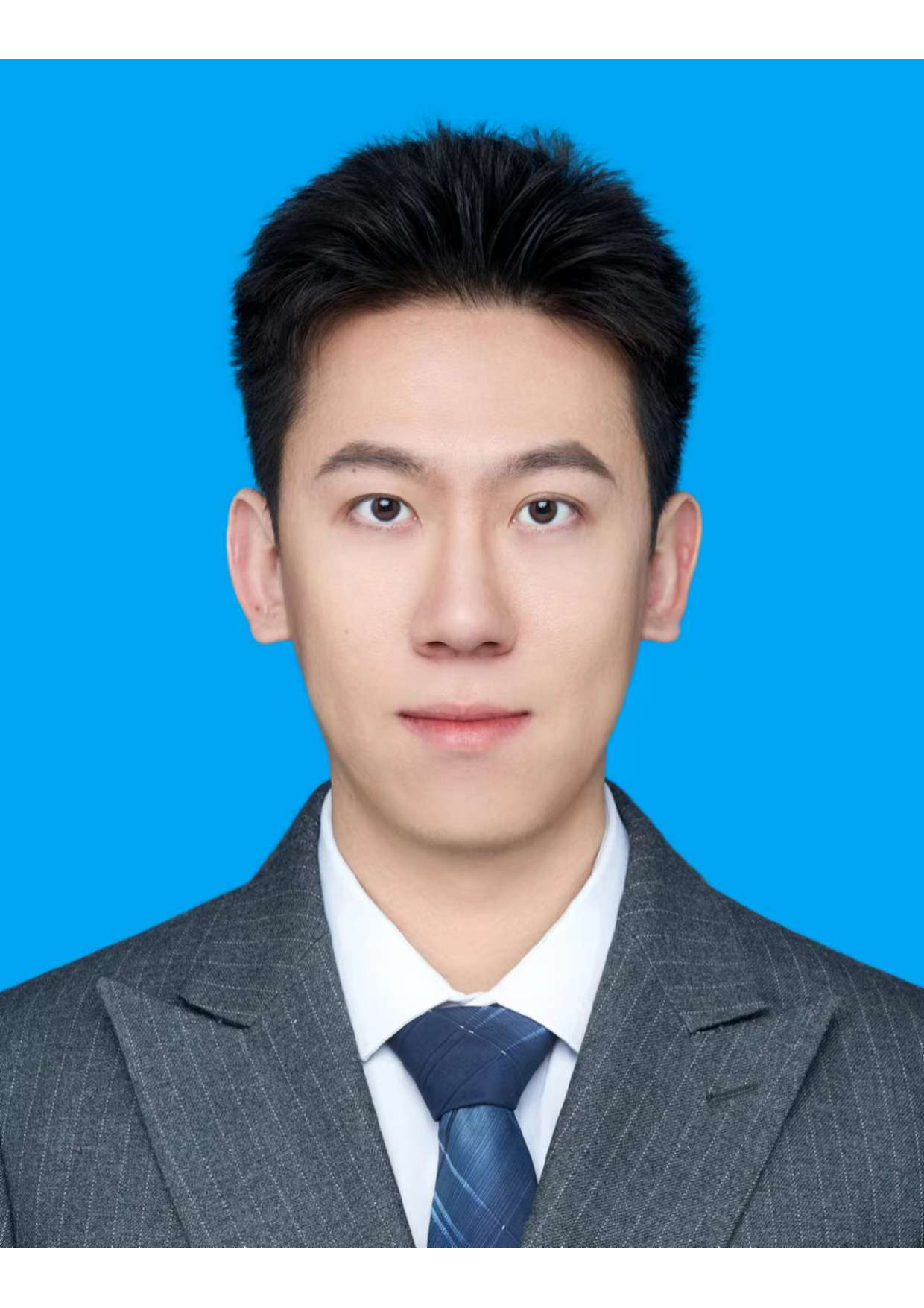}}]{Haoyu Wang}
    received the B.S. degree in automation and the Ph.D. degree in control theory and control engineering from Tianjin University, Tianjin, China, in 2018 and 2023, respectively.
    
    In 2023, he joined the School of Electrical and Information Engineering, Tianjin University, where he is a Postdoctoral Researcher. His research interests include active disturbance rejection control, motion control, and disturbance observer design with application to intelligent vehicles.
    \end{IEEEbiography}
    \vspace{-30pt}
      \begin{IEEEbiography}[{\includegraphics[width=1in,height=1.25in,clip,keepaspectratio]{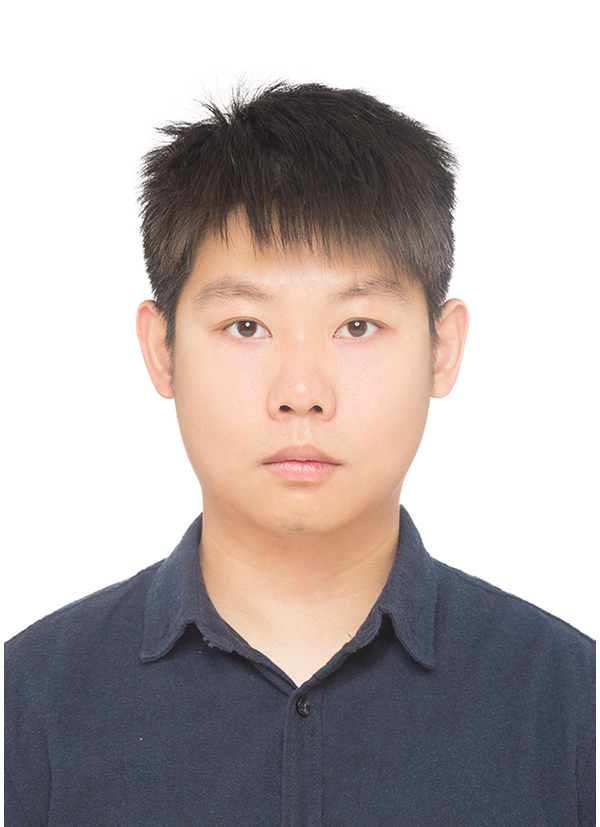}}]{Zheng Li}
received his M.Eng. degree in control engineering from School of Electrical and Information Engineering, Tianjin University, in 2021. He is currently pursuing the Ph.D. degree with the Tianjin Key Laboratory of Intelligent Unmanned Swarm Technology and System, Tianjin University. From January 2024 to December 2024, he worked as a visiting Ph.D. student in the Department of Mechanical Engineering, University of Victoria, Victoria BC, Canada. \\
 His research interests including interactive multi-task prediction, maneuver decision, trajectory planning, automatic control and vehicular simulation and verification for autonomous vehicles.
    \end{IEEEbiography}
    % \vspace{-30pt}
    \begin{IEEEbiography}[{\includegraphics[width=1in,height=1.25in,clip,keepaspectratio]{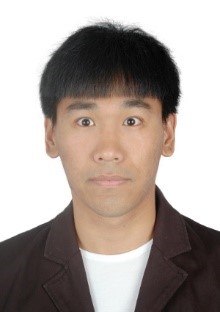}}]{Peng Li} received the Ph.D. degree in control science and engineering in 2024 from Tianjin University, China. He is currently a Research Associate with the School of Electrical and Information Engineering, Tianjin University, China. His research interests include swarm energy systems, networked control systems, wheeled mobile robots, and finite frequency analysis.
    \end{IEEEbiography}
    % \vspace{-30pt}
  \begin{IEEEbiography}[{\includegraphics[width=1in,height=1.25in,clip,keepaspectratio]{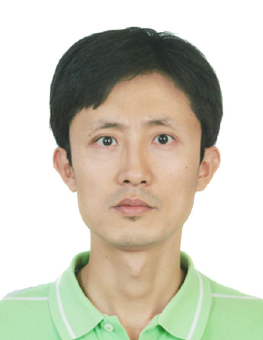}}]{Zhiqiang Zuo}
    (Senior Member, IEEE) received the M.S. degree in control theory and control engineering from Yanshan University, Qinhuangdao, China, in 2001, and the Ph.D. degree in control theory from
    Peking University, Beijing, China, in 2004.
    
    In 2004, he joined the School of Electrical and Information Engineering, Tianjin University, Tianjin, China, where he is a Full Professor. From 2008 to 2010, he was a Research Fellow with the Department of Mathematics, City University of Hong Kong, Hong Kong. From 2013 to 2014, he was a Visiting Scholar with the University of California at Riverside, Riverside, CA, USA. His research interests include nonlinear control, robust control, and multiagent systems, with application to intelligent vehicles.
    \end{IEEEbiography}
    % \vspace{-20pt}
      \begin{IEEEbiography}[{\includegraphics[width=1in,height=1.25in,clip,keepaspectratio]{hc.pdf}}]{Chuan Hu}
    received the B.S. degree in automotive engineering from Tsinghua University, Beijing, China, in 2010, the M.S. degree in vehicle operation engineering from the China Academy of Railway Sciences, Beijing, in 2013, and the Ph.D. degree in
    mechanical engineering from McMaster University, Hamilton, ON, Canada, in 2017. He has been a tenure-track Associate Professor with the School of Mechanical Engineering, Shanghai Jiao Tong University, Shanghai, China, Since July 2022. His
    research interests include the perception, decisionmaking,path planning, and motion control of intelligent and connected vehicles (ICVs), autonomous driving, eco-driving, human–machine trust an cooperation, shared control, and machine-learning applications in ICVs.
    \end{IEEEbiography}

\vfill

\end{document}